\documentclass{article}

\PassOptionsToPackage{sort&compress}{natbib}



\usepackage[final]{neurips_2025}


\usepackage[utf8]{inputenc} 
\usepackage[T1]{fontenc}    
\usepackage{hyperref}       
\usepackage{url}            
\usepackage{booktabs}       
\usepackage{amsfonts}       
\usepackage{nicefrac}       
\usepackage{microtype}      
\usepackage{xcolor}         

\usepackage{amsmath}
\usepackage{amssymb}
\usepackage{mathtools}
\usepackage{amsthm}
\usepackage{bbm}
\usepackage{algorithm}
\usepackage{algpseudocode}
\usepackage{subcaption}

\usepackage[capitalize,noabbrev]{cleveref}

\theoremstyle{plain}
\newtheorem{theorem}{Theorem}[section]
\newtheorem{proposition}[theorem]{Proposition}

\theoremstyle{definition}
\newtheorem{definition}[theorem]{Definition}

\newtheorem{example}{Example}

\theoremstyle{remark}


\usepackage{enumitem}
\usepackage{mathrsfs}
\usepackage[on]{editing}
\usepackage{xcolor}
\usepackage{wrapfig}
\usepackage{multirow}
\usepackage{booktabs} 
\usepackage{graphicx} 
\usepackage{adjustbox} 
\usepackage{caption}   
\usepackage{makecell} 

\newcommand{\interv}[1]{\operatorname{do}({#1})}

\Crefname{equation}{Eq.}{Eqs.}
\Crefname{align}{Eq.}{Eqs.}
\Crefname{figure}{Fig.}{Figs.}
\Crefname{tabular}{Tab.}{Tabs.}
\Crefname{theorem}{Thm.}{Thms.}
\Crefname{lemma}{Lem.}{Lems.}
\Crefname{proposition}{Prop.}{Props.}
\Crefname{definition}{Def.}{Defs.}
\Crefname{algorithm}{Algo.}{Algs.}
\Crefname{corollary}{Corol.}{Corol.}
\Crefname{section}{Sec.}{Sec.}
\Crefname{appendix}{App.}{Apps.}
\Crefname{prop}{Prop.}{Props.}
\Crefname{task}{Task.}{Tasks.}
\Crefname{setting}{Setting.}{Settings.}
\Crefname{example}{Example}{Expamples}

\newcommand{\Brackets}[1]{\left[ #1\right]}

\newcommand{\Parens}[1]{\left(#1\right)}

\newcommand{\I}{\boldsymbol{1}}

\newcommand{\M}{\mathcal{M}}

\newcommand{\doo}{\text{do}}


\def\*#1{\boldsymbol{#1}}
\def\1#1{\mathcal{#1}}
\def\2#1{\mathscr{#1}}
\def\3#1{\mathbb{#1}}

\newcommand{\invE}[2]{\3E_{#2}\left [ #1\right]}

\usepackage{pgfplots}
\DeclareUnicodeCharacter{2212}{−}
\usepgfplotslibrary{groupplots,dateplot}
\usetikzlibrary{patterns,shapes.arrows}
\pgfplotsset{compat=newest}

\usetikzlibrary{positioning,arrows,shapes,shapes.arrows,arrows.meta,trees,shapes.misc,shapes.geometric,decorations.pathreplacing,automata}
\tikzset{%
  >={Latex[width=1.5mm,length=2mm]},
  vertex/.style={draw,circle,inner sep=0mm,semithick,minimum width=4mm},
  point/.style = {circle, draw, inner sep=0.04cm,fill,node contents={}},
  uvertex/.style={outer sep=0},
  bidir/.style={<->,dashed, line width=0.25mm},
  dir/.style={->, line width=0.25mm},
  regime/.style={shape=rectangle,fill=black,inner sep=0pt,minimum size=3pt,draw},
  action/.style={shape=circle,fill=black,inner sep=0pt,minimum size=8pt,draw},
  node distance=1cm,
  font=\scriptsize\sffamily
}

\definecolor{betterred}{RGB}{228,26,28}
\definecolor{betterblue}{RGB}{55,126,184}
\definecolor{bettergreen}{RGB}{77,175,74}
\definecolor{betterpurple}{RGB}{152,78,163}
\definecolor{LightCyan}{rgb}{0.88,1,1}

\pgfdeclarelayer{back}
\pgfsetlayers{back,main}

\makeatletter
\pgfkeys{%
  /tikz/on layer/.code={
    \def\tikz@path@do@at@end{\endpgfonlayer\endgroup\tikz@path@do@at@end}%
    \pgfonlayer{#1}\begingroup%
  }%
}
\makeatother

\makeatletter
\newcommand{\xdashleftrightarrow}[2][]{\ext@arrow 3359\leftrightarrowfill@@{#1}{#2}}
\def\rightarrowfill@@{\arrowfill@@\relax\relbar\rightarrow}
\def\leftarrowfill@@{\arrowfill@@\leftarrow\relbar\relax}
\def\leftrightarrowfill@@{\arrowfill@@\leftarrow\relbar\rightarrow}
\def\arrowfill@@#1#2#3#4{%
  $\m@th\thickmuskip0mu\medmuskip\thickmuskip\thinmuskip\thickmuskip
   \relax#4#1
   \xleaders\hbox{$#4#2$}\hfill
   #3$%
}
\makeatother



\usepackage{titletoc}
\newcommand\DoToC{%
  \setcounter{tocdepth}{4} 
  \startcontents
  \printcontents{}{1}{\textbf{Contents}\vskip3pt\hrule\vskip5pt}
  \vskip3pt\hrule\vskip5pt
}

\title{Confounding Robust Deep Reinforcement Learning: \\ A Causal Approach}

%

\author{%
  Mingxuan Li$^{1*}$ \quad Junzhe Zhang$^{2*}$ \quad Elias Bareinboim$^1$ \\
  $^1$ Columbia University, $^2$ Syracuse University\\
  $^1$\texttt{\{ml,eb\}@cs.columbia.edu}, $^2$\texttt{jzhan403@syr.edu}
}

\begin{document}

\maketitle

\begin{abstract}
A key task in Artificial Intelligence is learning effective policies for controlling agents in unknown environments to optimize performance measures. Off-policy learning methods, like Q-learning, allow learners to make optimal decisions based on past experiences. This paper studies off-policy learning from biased data in complex and high-dimensional domains where \emph{unobserved confounding} cannot be ruled out a priori. Building on the well-celebrated Deep Q-Network (DQN), we propose a novel deep reinforcement learning algorithm robust to confounding biases in observed data. Specifically, our algorithm attempts to find a safe policy for the worst-case environment compatible with the observations. We apply our method to twelve confounded Atari games, and find that it consistently dominates the standard DQN in all games where the observed input to the behavioral and target policies mismatch and unobserved confounders exist.
\end{abstract}
\section{Introduction}\label{sec:_1}
Over the last decade, reinforcement learning (RL) has gained significant popularity for solving complex sequential decision-making problems, primarily due to its integration with deep learning techniques \citep{lecun2015deep, schmidhuber2015deep, goodfellow2016deep}. This approach, known as deep reinforcement learning (deep RL), is particularly effective in high-dimensional state spaces \citep{mnihHumanlevelControlDeep2015a, schulmanHighDimensionalContinuousControl2016a, schulmanProximalPolicyOptimization2017, jumperHighlyAccurateProtein2021, akkaya2019solvingcube, robocook2023Shi}. Deep RL addresses the challenges faced by earlier RL methods by extracting various abstractions from data in complex domains with minimal prior knowledge. For example, a classic algorithm known as Deep Q-Network (DQN) algorithm can efficiently learn from visual inputs containing thousands of pixels \citep{mnihHumanlevelControlDeep2015a}, enabling problem-solving capabilities comparable to humans in certain high-dimensional environments, such as Atari games for the first time. These achievements were followed by notable advancements in deep RL, including mastering the game of Go \citep{silverMasteringGameGo2017}, defeating world-class professionals at the game of poker \citep{moravvcik2017deepstack}. Deep RL has also shown potentials in various real-world applications like robotics \citep{akkaya2019solvingcube, robocook2023Shi}, autonomous driving \citep{DBLP:journals/tits/KiranSTMSYP22autodriving} and protein design \citep{jumperHighlyAccurateProtein2021}. These advancements eventually culminated in Sutton and Barto receiving the Turing Award in 2025 \citep{turing2024}.

This paper attempts to leverage the capabilities and insights of DQN, while identifying an important assumption embedded in this algorithm and its variants that does not necessarily hold in the real world.  Particularly, we notice that it is often implicitly assumed through the (PO)MDPs \citep{rlbook} framework or explicitly enforced during the data-collection that no unmeasured confounder (NUC, \citep{robbins1985some,crlsurvey}) affects the observed action and the subsequent outcomes. When the NUC does not hold, the effect of the target policy is generally not \emph{identifiable}, i.e., the model assumptions are insufficient to uniquely determine the value function from the offline data \citep{pearl2009causality,zhang2019near}. On the other hand,  partial identification is a line of methodologies that enable the derivation of informative bounds on target effects from confounded observations in non-identifiable settings \citep{Manski1989NonparametricBO}. It has been studied under the rubrics of causal inference \citep{balke:pea97,zhang2022partial}, econometrics \citep{imbens1997bayesian,poirier1998revising,romano2008inference,stoye2009more,bugni2010bootstrap,todem2010global,moon2012bayesian}, and dynamical systems \citep{bajari2007estimating,norets2014semiparametric,dickstein2018exporters,morales2019extended,berry2023instrumental}. More recently, researchers have been using partial identification methods to obtain reliable off-policy evaluation in reinforcement learning  \citep{kallus2018confounding,zhang2019near,kallus2020confounding,namkoong2020off,khan2023off,bruns2023robust,kausik2024offline,zhang2025eligibility,li2025confoundedshaping,DBLP:conf/aaai/JoshiZB24safepolicycausal}. Despite these achievements, significant challenges still exist in applying partial identification for policy learning in complex and high-dimensional domains, including images and videos.  We refer readers to \Cref{app:_a} for a more detailed survey on partial identification and deep reinforcement learning.

This paper aims to address these challenges by investigating deep reinforcement learning algorithms from offline data over complex and high-dimensional domains, where the presence of unmeasured confounders could not be assumed away \emph{a priori}. More specifically, our contributions are summarized as follows. (1) We introduce a novel DQN algorithm, which we call Causal DQN, capable of learning robust abstractions from confounded data over complex and high-dimensional domains with minimal prior knowledge. (2) We empirically demonstrate that our method significantly improves robustness and generalization under confounded observations and outperforms various DQN baselines across twelve popular Atari games. Due to space constraints, details of the experiment setup and additional experiments are provided in \Cref{app:_d,app:_e}. Videos of gameplay are included in the supplemental.

\textbf{Notations.} We will consistently use capital letters ($V$) to denote random variables, lowercase letters ($v$) for their values, and cursive $\1V$ to denote the their domains. We use bold capital letters ($\boldsymbol{V}$) to denote a set of random variables and let $|\boldsymbol{V}|$ denote its cardinality of set $\boldsymbol{V}$. Finally, $\I_{\*Z = \*z}$ is an indicator function that returns $1$ if event $\*Z = \*z$ holds true; otherwise, it returns $0$. 
\section{Challenges Due to Unobserved Confounders}\label{sec:_2}
We will focus on a sequential decision-making problem in the Markov Decision Process (MDP, \citep{putermanMarkovDecisionProcesses1994book}) where the agent intervenes on a sequence of actions to optimize subsequent rewards. Standard MDP models focus on the perspective of learners who could actively intervene in the environment. Consequently, confounding is generally assumed away a priori. On the other hand, when considering off-policy data collected from passive observations, the learner does not necessarily have the liberty to control how the behavioral policy generates the data, giving rise to unobserved confounders in decision-making tasks \citep{kallus2018confounding, zhang2020causal, kumor2021causal, guo2022provably, ruan2024causal}. In this paper, we will consider a generalized family of confounded MDPs \citep{zhang2022can,bennett2021off,kallus2020confounding,zhang2025eligibility,li2025confoundedshaping} explicitly modeling the presence of unobserved confounders in the off-policy data generation.
\begin{definition}
\label{def:cmdp}
    A Confounded Markov Decision Process (CMDP) $\mathcal{M}$ is a tuple of $\langle \1S, \1X, \1Y, \1U, \1F, P \rangle$ where (1) $\1S, \1X, \1Y$ are, respectively, the space of observed states, actions, and rewards; (2) $\1U$ is the space of unobserved exogenous noise; (3) $\1F$ is a set consisting of the transition function $f_S: \1S \times \1X \times \1U \mapsto \1S$, behavioral policy $f_X: \1S \times \1U \mapsto \1X$, and reward function $f_Y: \1S \times \1X \times \1U \mapsto \1Y$; (4) $P$ is an exogenous distribution over the domain $\1U$.
\end{definition}

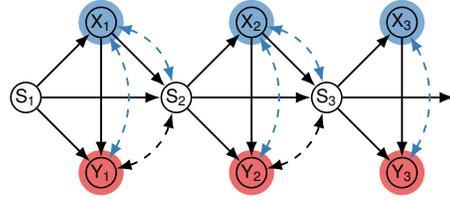
\begin{wrapfigure}[9]{r}{0.42\textwidth}
\vspace{-0.42in}
\centering
		\begin{tikzpicture}
			\def\outerr{3.2}
			\def\innerr{3}
			\node[vertex] (S1) at (-1, -1) {S\textsubscript{1}};
			\node[vertex] (X1) at (0, 0) {X\textsubscript{1}};
			\node[vertex] (Y1) at (0, -2) {Y\textsubscript{1}};
			\node[vertex] (S2) at (1, -1) {S\textsubscript{2}};
			\node[vertex] (X2) at (2, 0) {X\textsubscript{2}};
			\node[vertex] (Y2) at (2, -2) {Y\textsubscript{2}};
			\node[vertex] (S3) at (3, -1) {S\textsubscript{3}};
			\node[vertex] (X3) at (4, 0) {X\textsubscript{3}};
			\node[vertex] (Y3) at (4, -2) {Y\textsubscript{3}};

			\draw[dir] (S1) to (S2);
			\draw[dir] (S1) to (X1);
			\draw[dir] (S1) to (Y1);
			\draw[dir] (X1) to (Y1);
			\draw[dir] (X1) to (S2);

			\draw[bidir, draw=betterblue] (X1) to [bend left = 30] (S2);
			\draw[bidir, draw=betterblue] (X1) to [bend left = 30] (Y1);
			\draw[bidir] (Y1) to [bend right = 30] (S2);

			\draw[dir] (S2) to (S3);
			\draw[dir] (S2) to (X2);
			\draw[dir] (S2) to (Y2);
			\draw[dir] (X2) to (Y2);
			\draw[dir] (X2) to (S3);

			\draw[bidir, draw=betterblue] (X2) to [bend left = 30] (S3);
			\draw[bidir, draw=betterblue] (X2) to  [bend left = 30] (Y2);
			\draw[bidir] (Y2) to [bend right = 30] (S3);

			\draw[dir] (S3) to (X3);
			\draw[dir] (S3) to (Y3);
			\draw[dir] (X3) to (Y3);
			\draw[dir] (S3) to node {} (4.7, -1);

			\draw[bidir, draw=betterblue] (X3) to [bend left = 30] (Y3);

			\begin{pgfonlayer}{back}
				\node[circle,fill=betterblue!65,draw=none,minimum size=2*\innerr mm] at (X1) {};
				\node[circle,fill=betterblue!65,draw=none,minimum size=2*\innerr mm] at (X2) {};
				\node[circle,fill=betterblue!65,draw=none,minimum size=2*\innerr mm] at (X3) {};
				\node[circle,fill=betterred!65,draw=none,minimum size=2*\innerr mm] at (Y1) {};
				\node[circle,fill=betterred!65,draw=none,minimum size=2*\innerr mm] at (Y2) {};
				\node[circle,fill=betterred!65,draw=none,minimum size=2*\innerr mm] at (Y3) {};
			\end{pgfonlayer}
		\end{tikzpicture}
	\caption{Causal diagram representing the data-generating mechanisms in a Confounded Markov Decision Process.}
    \label{fig:_2_1_mdp}
\end{wrapfigure}

Throughout this paper, we will consistently assume the action domain $\1X$ to be discrete and finite, while the state domain $\1S$ could be complex and continuous; the reward domain $\1Y$ is bounded in a real interval $[a, b] \subset \3R$. Consider a demonstrator agent interacting with a CMDP $\1M$, generating the off-policy data. For every time step $t = 1, \dots, T$, the environment first draws an exogenous noise $U_t$ from the distribution $P(\1U)$; the demonstrator then performs an action $X_t \gets f_X(S_t, U_t)$, receives a subsequent reward $Y_t \gets r_t(S_t, X_t, U_t)$, and moves to the next state $S_{t + 1} \gets f_S(S_t, X_t, U_t)$. The observed trajectories of the demonstrator (from the learner's perspective) are summarized as the observational distribution $P(\bar{\*X}_{1:T}, \bar{\*S}_{1:T}, \bar{\*Y}_{1:T})$, i.e.,
\begin{align*}
    P(\bar{\*x}_{1:T}, \bar{\*s}_{1:T}, \bar{\*y}_{1:T}) = P(s_1) \prod_{t=1}^T \bigg ( \int_{\1U}  \I_{s_{t+1} = f_S(s_t, x_t, u_t)} \I_{x_t = f_X(s_t, u_t)}\I_{y_h = f_Y(s_t, x_t, u_t)} P(u_t) \bigg )&
\end{align*}
\Cref{fig:_2_1_mdp} shows the causal diagram $\1G$ \citep{PCH} describing the generative process of the off-policy data in CMDPs. More specifically, solid nodes represent observed variables $X_t, S_t, Y_t$, and arrows represent the functional relationships $f_X, f_S, f_Y$ among them. By convention, exogenous variables $U_t$ are often not explicitly shown in the graph; bi-directed arrows $X_t \leftarrow \rightarrow Y_t$ and $X_t \leftarrow \rightarrow S_{t+1}$ indicate the presence of an unobserved confounder (UC) $U_t$ affecting the action, state, and reward simultaneously. 
These bi-directed arrows (highlighted in \textcolor{betterblue}{blue}) represent the unobserved confounders among action $X_t$, reward $Y_t$, and state $S_{t+1}$ in the off-policy data, violating the condition of NUC \citep{robbins1985some,crlsurvey}. Such violations could lead to challenges in off-policy learning.


\paragraph{Off-Policy Learning.} A policy $\pi$ in a CMDP $\1M$ is a decision rule $\pi(x_t \mid s_t)$ mapping from state to a distribution over action domain $\1X$. An intervention $\doo(\pi)$ is an operation that replaces the behavioral policy $f_X$ in CMDP $\1M$ with the policy $\pi$. Let $\1M_{\pi}$ be the submodel induced by intervention $\doo(\pi)$. The interventional distribution $P_{\pi}(\bar{\*X}_{1:T}, \bar{\*S}_{1:T}, \bar{\*Y}_{1:T})$ is defined as the joint distribution over observed variables in $\1M_{\pi}$, i.e.,
\begin{equation}
    P_{\pi}(\bar{\*x}_{1:T}, \bar{\*s}_{1:T}, \bar{\*y}_{1:T}) = P(s_1) \prod_{t = 1}^{T} \bigg (\pi(x_t \mid s_t) \1T(s_t, x_t, s_{t+1})\1R(s_t, x_t, y_t)\bigg)
\end{equation}
where the transition distribution $\1T$ and the reward distribution $\1R$ are given by, for $h = 1, \dots, H$,
\begin{align}
	&\1T(s_t, x_t, s_{t+1}) = \int_{\1U} \I_{s_{t+1} = f_S(s_t, x_t, u_t)} P(u_t), &&\1R(s_t, x_t, y_t) = \int_{\1U} \I_{y_t = f_Y(s_t, x_t, u_t)} P(u_t)
\end{align}
For convenience, we write the reward function $\1R(s, x)$ as the expected value $\sum_{y} y \1R(s, x, y)$. Fix a discounted factor $\gamma \in [0, 1]$. A common objective for an agent is to optimize its cumulative return $R_t = \sum_{i = 0}^{\infty} \gamma^i Y_{t + i}$.  We define the optimal action-value function $Q_*(s, x)$ as the maximum expected return obtainable by following any policy $\pi$, after seeing a state $s$ and taking an action $x$, $Q_*(s, x) = \max_{\pi} \invE{R_t \mid S_t = s}{X_t \gets x, \pi}$. One could solve for an optimal policy by iteratively evaluating the action-value function using the \emph{Bellman Optimality Equation} \citep{bellmanDynamicProgramming1966} given by,
\begin{align}
	Q_{*}(s, x) & = \invE{Y_t + \gamma \max_{x'} Q_{*}(S_{t+1}, x') \mid S_t = s}{X_t \gets x}\label{eq:_2_bellman_q}
\end{align}
In off-policy evaluation, the agent (i.e., learner) attempts to learn an optimal policy by leveraging the observed data generated by a different behavior policy $f_X$ (demonstrator). When there is no unmeasured confounder (NUC) introducing spurious correlations between action and subsequent outcomes, one could identify the parameterizations of the transition distribution $\1T$ and reward function $\1R$ from the observed data, i.e., 
\begin{align}
		 &\1T(s_t, x_t, s_{t+1}) = P\Parens{s_{t+1} \mid s_t, x_t}, &&\1R(s_t, x_t, y_t) =P \Parens{y_t \mid s_t, x_t} \label{eq:_2_ope}
\end{align}   
When the above identification formula hold, several off-policy algorithms have been proposed to estimate the effect of candidate policies from finite observations \citep{watkins1989learning,watkins1992q,swaminathan2015counterfactual,jiang2015doubly,precup2000eligibility,munos2016safe,DBLP:conf/aaai/Jung0B21,DBLP:conf/nips/Jung0B20}. Together with the computational framework of deep learning, these methods could be further extended to complex domains \citep{mnihHumanlevelControlDeep2015a,schulmanHighDimensionalContinuousControl2016a,schulmanProximalPolicyOptimization2017,akkaya2019solvingcube,robocook2023Shi}. However, NUC could be fragile in practice and does not necessarily hold due to some violations in the generative process. In these situations, applying standard off-policy methods may fail to converge to an optimal policy, despite using powerful deep learning models. The following example illustrates such challenges in a classic Atari game.

\begin{example}[Confounded Pong]\label{exp:_2_1}

    Consider the Pong game in the classic Atari suite. As for the behavior policy, we use a pre-trained high-performing actor-critic agent \citep{alonso2024diffusionatariworldmodel} with residual blocks \citep{resnet} and Long Short-Term Memory (LSTM, \citep{hochreiter1997long}) layers. \Cref{fig:_2_2_a} shows a saliency map visualizing the learned policy. Simulation results show that this policy is optimal. For example, this agent can deliver a “kill shot” by directing the ball to a location that the opponent is unlikely to intercept given its current location. We use this agent as the demonstrator in the off-policy learning task.

    We now consider an alternative agent that learns to play Pong by observing the demonstrator's gameplay trajectories. This learning agent has a simpler neural network architecture and an impaired sensory capability: it can only observe movements in its nearby surroundings. \Cref{fig:_2_2_b} shows the learner's visual input; the board's left-hand side and the upper side, including the opponent's position and score, is now masked. In this case, the opponent's position becomes an unobserved confounder, introducing spurious correlations between the demonstrator's action and observed outcome. For example, the behavioral policy tends to hit the ball toward the center only when the opponent is positioned at either corner and unable to return it. As a result, center shots appear more effective than they truly are, due to confounding resulted from the masked opponent's position.

    To validate whether the standard deep RL algorithms are robust to  confounding biases, we train two DQN learners on masked trajectories from the demonstrator: one is the standard convolutional neural network based DQN (Nature DQN, \citep{mnihHumanlevelControlDeep2015a}), the second one is an LSTM-based \citep{DBLP:conf/aaaifs/HausknechtS15recurrentdqn}. We also include a standard DQN that learns directly in the masked Pong game without confounded demonstrations as a baseline. The simulation results, shown in \Cref{fig:_2_2_d}, indicate that none of those DQN variants is able to converge to an effective policy. Learning directly with impaired observation in Atari is challenging, while incorporating confounded demonstrations directly also does not enhance the convergence of DQN agents; instead, it negatively impacts their learning performance. $\hfill \blacksquare$
\end{example}

\begin{figure}[t]
\centering
\hfill
        \begin{subfigure}{0.18\linewidth}\centering
		\includegraphics[width=\linewidth]{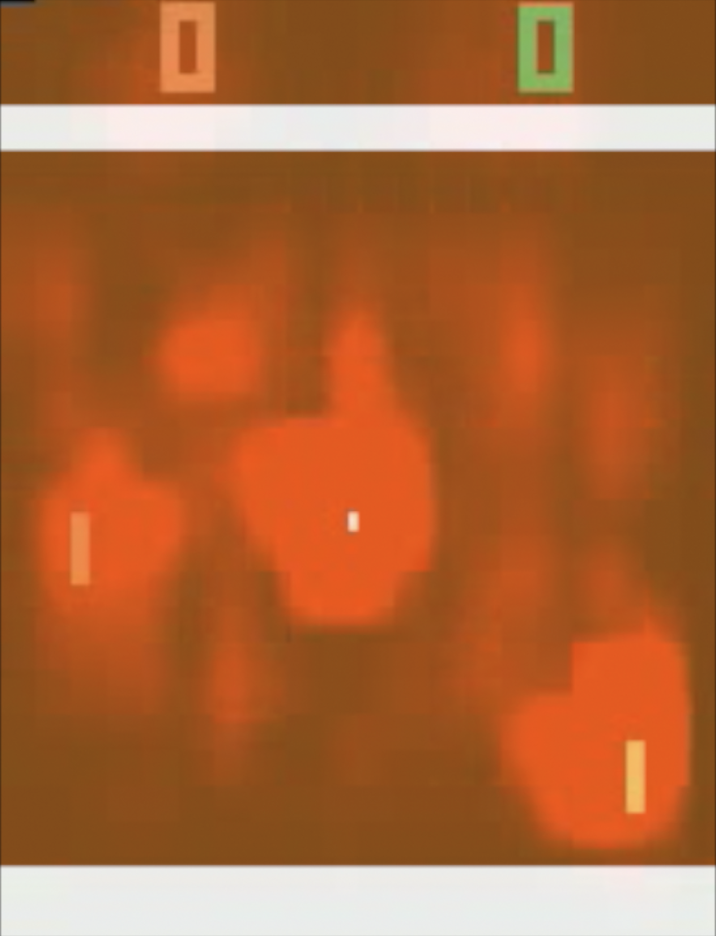}
		\caption{}
		\label{fig:_2_2_a}
	\end{subfigure}\hfill
        \begin{subfigure}{0.18\linewidth}\centering
		\includegraphics[width=\linewidth]{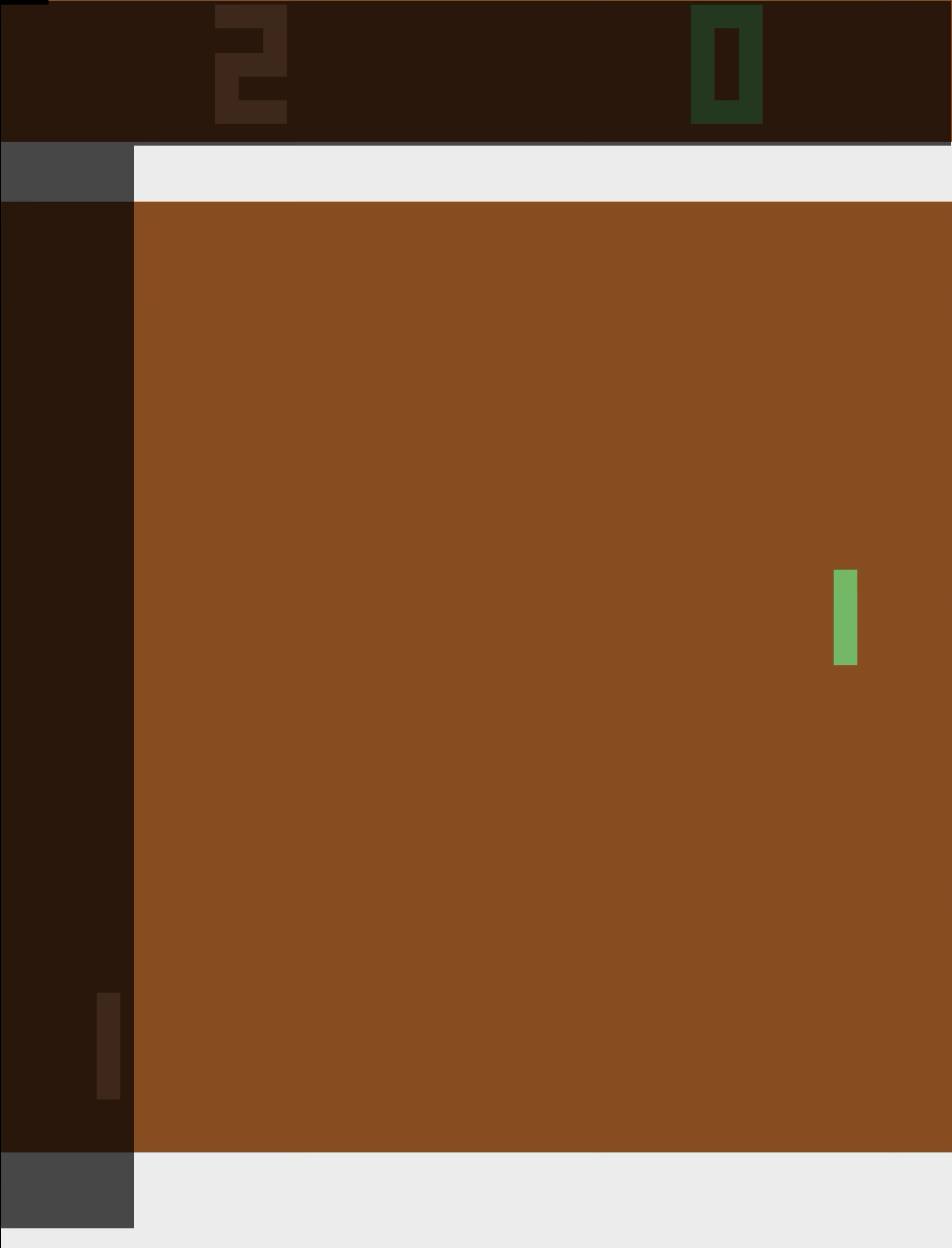}
		\caption{}
		\label{fig:_2_2_b}
	\end{subfigure}\hfill
	\begin{subfigure}{0.18\linewidth}\centering
		\includegraphics[width=\linewidth]{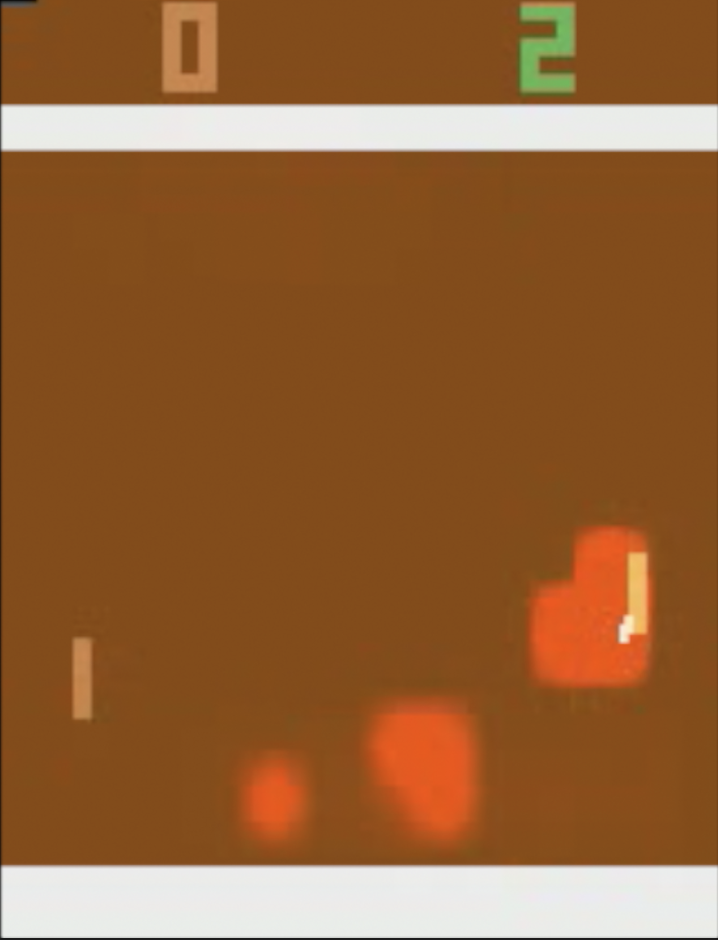}
		\caption{}
		\label{fig:_2_2_c}
	\end{subfigure}\hfill
        \begin{subfigure}{0.4\linewidth}\centering
		\includegraphics[width=\linewidth]{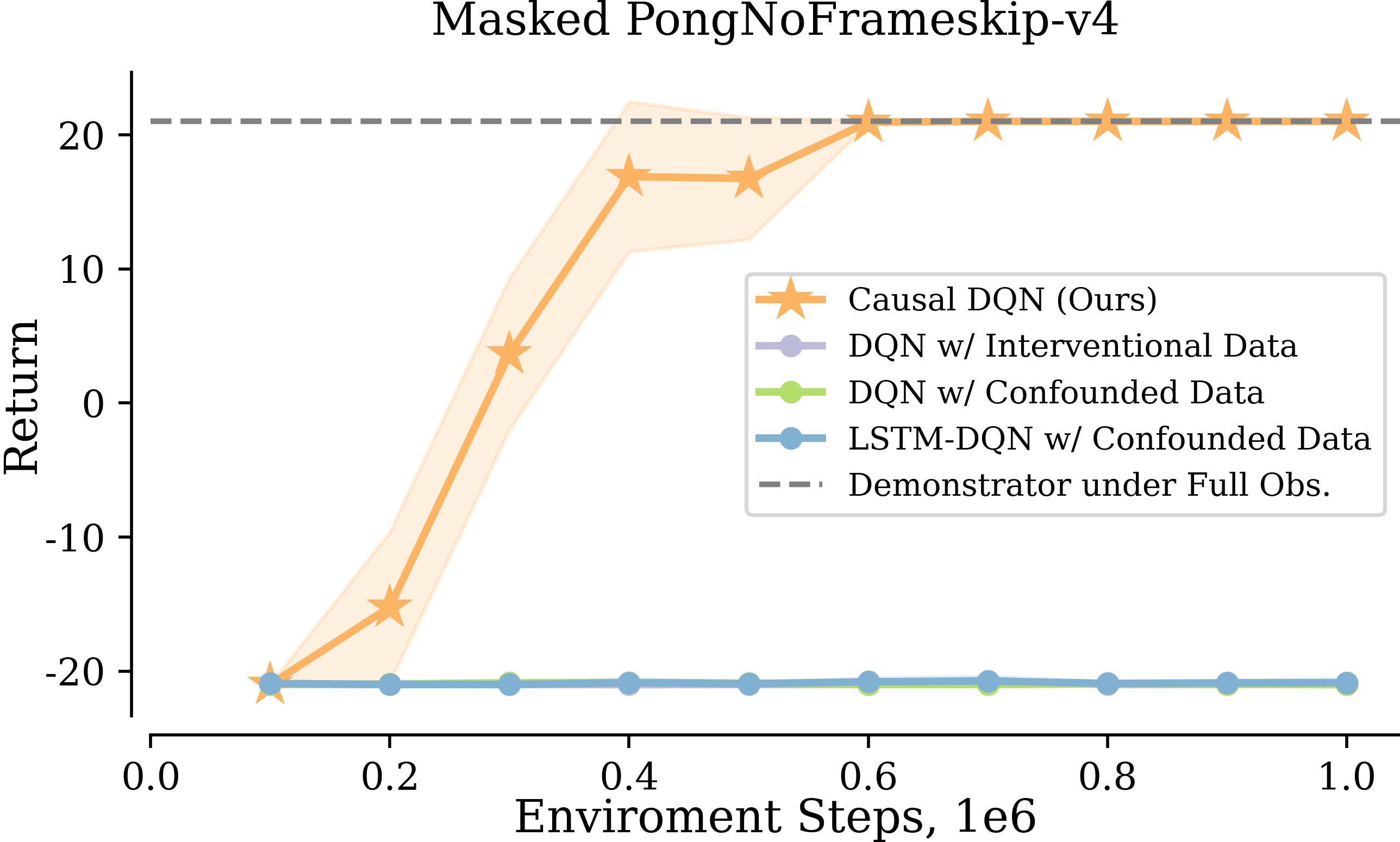}
		\caption{}
		\label{fig:_2_2_d}
	\end{subfigure}\hfill\null
	\caption{(\subref{fig:_2_2_a}) A saliency map of the behavioral policy in Pong that tracks the opponent's location and score board; (\subref{fig:_2_2_b}) a confounded Pong game where the opponent's location and score board is masked; (\subref{fig:_2_2_c}) a saliency map of the conservative policy focusing on only itself and the ball; (\subref{fig:_2_2_d}) the average return of our causal DQN and the standard DQN baselines. Baseline curves are overlapped.}
    \label{fig:_2_1_windy}
\end{figure}
\section{Confounding Robust Deep Q-Learning} \label{sec:_3}
In this section, we will introduce partial identification methods for off-policy learning that are robust to unobserved confounding. Recently, Zhang \& Bareinboim \cite{zhang2025eligibility} extended the well-celebrated Bellman equation to allow one to lower bound the state-action value function $Q_{\pi}(s, x)$ with a closed-form solution $\underline{Q_{\pi}}(s, x)$, which can be consistently estimated from the confounded observations. We extend this result to obtain a lower bound for the optimal value function.
\begin{proposition}[Causal Bellman Optimality Equation]\label{prop:_3_1}
	For a CMDP environment $\M$ with reward signals $Y_t \in [a, b] \subseteq \3R$, its optimal state-action value function $Q_{*}(s, x) \geq \underline{Q_{*}}(s, x)$ for any state-action pair $(s, x) \in \1S \times \1X$, where the lower bound $\underline{Q_{*}}(s, x)$ is given by as follows,
	\begin{align}
		\underline{Q_{*}}(s, x) =  P(x\mid s)\bigg (\widetilde{\1R}\Parens{s, x} + \gamma \sum_{s', x'} \widetilde{\1T}\Parens{s, x, s'} \max_{x'} \underline{Q_{*}}(s', x') \bigg ) & \label{eq:_3_lower_q_1} \\
		+ P(\neg x \mid s) \bigg (a + \gamma \min_{s'}\max_{x'} \underline{Q_{*}}(s', x') \bigg ) & \label{eq:_3_lower_q_2}
	\end{align}
        where $P(x \mid s) = P\Parens{X_t = x \mid S_t = s}$ and $P(\neg x \mid s) = 1 - P(x \mid s)$; $\widetilde{\1T}$ and $\widetilde{\1R}$ are nominal transition distribution and reward function computed from the observational distribution, i.e., 
    \begin{align}
        &\widetilde{\1T}\Parens{s, x, s'} = P\Parens{S_{t+1} = s' \mid S_t = s, X_t = x}, &&\widetilde{\1R}\Parens{s, x} = \3E\Brackets{Y_t \mid S_t = s, X_t = x} \label{eq:nominal}
    \end{align}
\end{proposition}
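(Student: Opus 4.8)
The plan is to recognize $\underline{Q_{*}}$ as the fixed point of a ``worst-case'' Bellman operator that dominates, from below, the true causal Bellman optimality operator, and then to transfer this one-step domination to the fixed points through a monotone value-iteration induction. Write $\1B$ for the true operator $\1B q(s,x) = \1R(s,x) + \gamma\sum_{s'}\1T(s,x,s')\max_{x'}q(s',x')$, whose unique fixed point is $Q_{*}$ by \Cref{eq:_2_bellman_q}, and write $\underline{\1B}$ for the operator obtained by replacing the right-hand side of \Cref{eq:_3_lower_q_1,eq:_3_lower_q_2} by the same expression evaluated at a generic argument $q$, so that $\underline{Q_{*}}$ is by definition the fixed point of $\underline{\1B}$.

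First I would establish the \emph{causal decomposition} of the true one-step quantities into an identifiable part and a nonnegative ``counterfactual'' remainder. Using the structural definitions $\1T(s,x,s') = \int_{\1U}\I_{s'=f_S(s,x,u)}P(u)$ and $\1R(s,x)=\int_{\1U} f_Y(s,x,u)P(u)$, and splitting the integral over $\1U$ according to whether the demonstrator's action $f_X(s,u)$ equals $x$, I would show
$$\1T(s,x,s') = P(x\mid s)\,\widetilde{\1T}(s,x,s') + \mu_{s,x}(s'), \qquad \1R(s,x) = P(x\mid s)\,\widetilde{\1R}(s,x) + \rho_{s,x},$$
where the consistency terms reproduce the nominal quantities of \Cref{eq:nominal} (since conditioning on $X_t=x$ restricts the integral to $\{u: f_X(s,u)=x\}$, whose mass is exactly $P(x\mid s)$), the transition remainder $\mu_{s,x}(s')=\int_{\1U}\I_{s'=f_S(s,x,u)}\I_{f_X(s,u)\neq x}P(u)$ is nonnegative with $\sum_{s'}\mu_{s,x}(s')=P(\neg x\mid s)$, and the reward remainder satisfies $\rho_{s,x}\geq a\,P(\neg x\mid s)$ because $f_Y\in[a,b]$.

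Next I would prove the \emph{one-step domination} $\1B q \geq \underline{\1B} q$ pointwise, for every bounded $q$. Substituting the decomposition into $\1B q$ and bounding the two counterfactual contributions by their worst case — the reward remainder below by $a\,P(\neg x\mid s)$, and $\sum_{s'}\mu_{s,x}(s')\max_{x'}q(s',x') \geq P(\neg x\mid s)\min_{s'}\max_{x'}q(s',x')$ using nonnegativity of $\mu_{s,x}$ and $\sum_{s'}\mu_{s,x}(s')=P(\neg x\mid s)$ — collapses $\1B q$ exactly to $\underline{\1B} q$. I would also record the two routine facts that $\underline{\1B}$ is monotone (all coefficients are nonnegative and $\max$, $\min$ preserve order) and a $\gamma$-contraction in $\|\cdot\|_\infty$ (the weights satisfy $P(x\mid s)+P(\neg x\mid s)=1$ and both $\max_{x'}$ and $\min_{s'}\max_{x'}$ are $1$-Lipschitz), so that its fixed point $\underline{Q_{*}}$ is unique and is the limit of value iteration from any initialization.

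Finally I would close the argument by induction along the two value-iteration sequences $Q_{*}^{(n+1)}=\1B Q_{*}^{(n)}$ and $\underline{Q_{*}}^{(n+1)}=\underline{\1B}\,\underline{Q_{*}}^{(n)}$ started from a common $q^{(0)}$: assuming $Q_{*}^{(n)}\geq \underline{Q_{*}}^{(n)}$, the one-step domination gives $\1B Q_{*}^{(n)} \geq \underline{\1B}Q_{*}^{(n)}$, and monotonicity of $\underline{\1B}$ together with the inductive hypothesis gives $\underline{\1B}Q_{*}^{(n)} \geq \underline{\1B}\,\underline{Q_{*}}^{(n)}$, whence $Q_{*}^{(n+1)}\geq \underline{Q_{*}}^{(n+1)}$; passing to the contraction limits yields $Q_{*}\geq\underline{Q_{*}}$. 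I expect the main obstacle to be the causal decomposition step rather than the recursion: one has to argue carefully that conditioning the observational distribution on $X_t=x$ is exactly integrating the structural kernels over the event $\{f_X(s,u)=x\}$, so that the identifiable mass is precisely $P(x\mid s)$ and the leftover mass $P(\neg x\mid s)$ is genuinely unconstrained except for the range bound $[a,b]$ on $Y$, and to treat the $\min_{s'}$ as an infimum over a possibly continuous $\1S$ — which stays finite because bounded rewards and $\gamma<1$ keep all value functions uniformly bounded.
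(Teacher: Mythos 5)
Your proposal is correct and takes essentially the same route as the paper's proof in Appendix~\ref{app:_b}: your structural decomposition with remainders $\mu_{s,x}$ and $\rho_{s,x}$ is precisely the Manski-style natural bounding of the interventional $\1R$ and $\1T$ that the paper invokes, and your contraction and monotonicity facts mirror its companion convergence proposition. If anything, your two-sequence induction from a common initialization is slightly more careful than the paper's closing step, which asserts $Q \geq TQ$ for arbitrary $Q$ (true only when established at $Q_*$ itself) and relies implicitly on the monotonicity of the operator when passing from $Q_* \geq TQ_*$ to $Q_* \geq \lim_{k\to\infty} T^k Q_* = \underline{Q_*}$.
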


\Cref{prop:_3_1} lower bounds the expected return of an optimal policy $\pi^*$ using the return of a pessimistic policy $\underline{\pi^*}$ that optimizes a worst-case CMDP instance $\underline{\1M}$ compatible with the observational data. The lower bound is set as the expected return of the pessimistic policy $\underline{\pi^*}$ in the worst-case CMDP $\underline{\1M}$, i.e., $ \underline{Q_{*}}(s, x) \triangleq Q_{\underline{\pi^*}}(s, x; \underline{\1M})$. Since $\pi^*$ is optimal in the ground-truth CMDP environment $\1M$, we must have $Q_{*}(s, x; \1M) \geq Q_{\underline{\pi^*}}(s, x; \1M) \geq Q_{\underline{\pi^*}}(s, x; \underline{\1M})$. Optimizing the lower bound in \Cref{prop:_3_1} leads to a pessimistic policy with a performance guarantee in the ground-truth environment. Among quantities in the lower bound \Cref{prop:_3_1}, nominal transition distribution $\widetilde{\1T}$ and nominal reward function $\widetilde{\1R}$ are functions of the observational distribution, and, at least in principle, are consistently estimable from the sampling process. The lower bound $\underline{Q_{*}}(s, x)$ can thus be further written as:
\begin{align}
    \!\!\!\!\!\!\!\!\underline{Q_{*}}(s, x) = \3E \Brackets{\I_{X_t = x} \Parens{Y_t + \max_{x'} \underline{Q_{*}}(S_{t+1}, x')} + \I_{X_t \neq x} \Parens{a + \min_{s'}\max_{x'} \underline{Q_{*}}(s', x')} \mid S_t = s} \label{eq:_3_update}
\end{align}

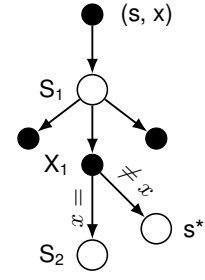
\begin{wrapfigure}[13]{r}{0.2\textwidth}
	\centering
	\begin{tikzpicture}
		\def\outerr{3}
		\def\innerr{2.7}

		\node[action, label={[label distance=0.04in]0: \small (s, x)}] (start) at (0, 0) {};
		\node[vertex, label={[label distance=0.01in]180: \small S\textsubscript{1}}] (S1) at (0, -1) {};
		\node[action, label={[label distance=0.01in]180: \small X\textsubscript{1}}] (X1) at (0, -2) {};
		\node[vertex, label={[label distance=0.01in]0: \small s*}] (S2p) at (0.85, -2.85) {};
		\node[action] (X1l) at (-0.85, -1.65) {};
		\node[action] (X1r) at (0.85, -1.65) {};
		\node[vertex, label={[label distance=0.01in]180: \small S\textsubscript{2}}] (S2) at (0, -3.2) {};

		\draw[dir] (start) -- (S1);
		\draw[dir] (S1) -- (X1);
		\draw[dir] (S1) -- (X1l);
		\draw[dir] (S1) -- (X1r);
		\draw[dir] (X1) -- (S2)node [below, midway, sloped] {\small $= x$};
		\draw[dir] (X1) -- (S2p) node [above, midway, sloped] {\small $\neq x$};
	\end{tikzpicture}
	\caption{Backup diagram for causal deep Q-learning.}
	\label{fig:_3_backup}
\end{wrapfigure}
In the above equation, $Y_t$ and $S_t$ are observed variables drawn from the nominal reward function $\widetilde{\1R}$ and transition distribution $\widetilde{\1T}$ in \Cref{eq:nominal}, respectively. \Cref{fig:_3_backup} shows a backup diagram illustrating this update step. Like the standard Bellman optimality equation (\Cref{eq:_2_bellman_q}), \Cref{eq:_3_update} recursively updates the value function based on the current estimates of the optimal value function. On the other hand, \Cref{eq:_3_update} explicitly accounts for the off-poicy nature of the confounded observations: when the behavior policy takes the same action $x_t = x$ as the target action, the update follows standard Bellman equation and uses the next sampled state $s_t$; when the sampled action $x_t \neq x$ differs from the target, our algorithm updates, instead, using the value function associated with the next worst-case or best-case state $s^*$, corresponding to the estimation of the lower bound and upper bound respectively.

By using the causal Bellman equation of \Cref{eq:_3_update} as an iterative update, one could apply standard value iteration to obtain a robust policy against the confounding bias in the off-policy data \citep{zhang2025eligibility}. In practice, however, this approach could be computationally challenging for complex and high-dimensional domains. Like many deep reinforcement learning algorithms \citep{mnihHumanlevelControlDeep2015a}, we will use a neural network with weights $\theta$, called Q-network, to approximate the lower bound over the state-action value function, i.e., $\underline{Q_{*}}(s, x; \theta) \approx \underline{Q_{*}}(s, x)$. We will train a Q-network by minimizing a sequence of loss functions $L_i(\theta_i)$ at each iteration $i$. Formally,
\begin{align}
    L_i(\theta_i) = \3E_{s \sim \rho(\cdot)} \Brackets{ \sum_{x} \Parens{W_i(x) - \underline{Q_{*}}(s, x;\theta_i)}^2}
\end{align}
where function $W_i(x)$ is defined as the right-hand side of the update procedure \Cref{eq:_3_update}; and $\rho(s)$ is the state occupancy distribution in the observed Markov chain under the behavioral policy. The parameters from the previous iteration $\theta_{i-1}$ are held fixed when optimizing the loss function. Note that the above loss function attempts to minimize the error of the Q-network bound over all actions. The reason is that, in the causal Bellman update (\Cref{eq:_3_update}), the next observed action contains information about the lower bound across all actions, regardless of whether it matches the actual action taken. Differentiating the loss function with respect to the weights, we arrive at the following gradient,
\begin{align}
    \nabla_{\theta_i} L_i(\theta_i) = \3E_{s \sim \rho(\cdot)} \Brackets{ \3E \Brackets{\sum_{x} \Parens{W_t(x) - \underline{Q_{*}}(s, x;\theta_i)}\nabla_{\theta_i}\underline{Q_{*}}(s, x;\theta_i) \mid S_t = s} } \label{eq:_3_gradient}
\end{align}

\begin{algorithm}[t]
	\caption{Causal Deep Q-Learning (\texttt{Causal-DQN})}
	\label{alg:_3_cdqn}
	\setlength{\textfloatsep}{0pt}
	\begin{algorithmic}[1]
		\State Initialize replay memory $\1D$
            \State Initialize action-value function $\underline{Q_{*}} (\cdot; \theta)$ with random weights $\theta$
            \For{episodes $=1, \dots, M$}
                \State Sample initial state $s_1$
                \For{$t =1, \dots, T$}
                    \State Observe an action $x_t$ taken by the demonstrator and subsequent reward $y_t$ and state $s_{t+1}$
                    \State Store transition $(s_t, x_t, y_t, s_{t+1})$ in $\1D$
                    \State Sample a minibatch of transitions $\{(s_i, x_i, y_i, s_{i+1})\}_{i=1}^B$ from $\1D$
                    \State Set value target $w_i(x)$ for every action $x \in \mathcal{X}$ w.r.t sample $(s_i, x_i, y_i, s_{i+1})$,
                        \begin{align}
                            w_i(x) =
                            \begin{cases}
                            y_i  + \gamma \max_{x'} \underline{Q_{*}}(s_{i+1}, x'; \theta) & \mbox{if } x = x_i    \\
                            a + \gamma \min_{s'}\max_{x'} \underline{Q_{*}}(s', x'; \theta)   & \mbox{if } x \neq x_i
                            \end{cases}
                        \end{align}
                    \State Perform a gradient descent step on $\sum_{x} \Parens{w_i(x) - \underline{Q_{*}}(s_{i}, x; \theta)}^2$ according to \Cref{eq:_3_gradient}
                \EndFor
            \EndFor
	\end{algorithmic}
\end{algorithm}
Details of our proposed algorithm, called Causal Deep Q-Learning (\texttt{Causal-DQN}), are provided in \Cref{alg:_3_cdqn}. Like the standard DQN \citep{mnihHumanlevelControlDeep2015a}, our algorithm utilizes experience replay \citep{lin1992reinforcement}. Particularly, it stores trajectories observed at each time step, represented as $(s_t, x_t, y_t, s_{t+1})$, in a replay memory $\mathcal{D}$ that is pooled from many episodes. During the inner loop of the algorithm, we apply minibatch stochastic gradient descent to samples of experience $(s_i, x_i, y_i, s_{i+1}) \sim \mathcal{D}$, which are randomly drawn from the pool of stored samples. 
On the other hand, our proposed causal algorithm made the following augmentations compared to the standard DQN. First, \texttt{Causal-DQN} is an off-policy learning algorithm and does not actively intervene in the environment. At step 6, instead of exploiting the Q-network being trained, it queries the demonstrator to generate a confounded transition sample. Second, at Step 9 during the experience replay, Causal-DQN utilizes the causal Q-learning updates of \Cref{eq:_3_gradient}, which is robust to the potential presence of confounders. Particularly, when the observed action $x_i$ is equal to the evaluated action $x$, the algorithm follows the standard Q-learning update. Otherwise, it performs the update using a lower-bound $a$ over the immediate reward and the value function at the worst-case next state $s'$. The worst-case state $s'$ is empirically estimated by repeatedly sampling the next possible states at random, and taking the one with the smallest value function estimate. These augmentations improve Causal-DQN over its non-causal counterpart in terms of robustness and sample efficiency, as it is able to utilize the abundant observational data to improve the evaluation of the state-action value function. 

\begin{example}[Confounded Pong continued]\label{exp:_3_1}
    Consider again the confounded Pong game described in \Cref{exp:_2_1}. We train a Causal DQN agent with the masked observed trajectories. \Cref{fig:_2_2_c} shows the saliency map visualizing the learned policy. Our proposed method learns a conservative policy focusing on only tracking the ball location instead of opponent's location. Simulation results show that this conservative policy is able to achieve comparable performance to the optimal demonstrator using the full board information. Analyzing the gameplay video reveals that our causal DQN agent learns to proactively place the ball in either corner, where the hard-coded AI opponent is unable to return. See the gameplay video in the supplementary materials for more details. $\hfill \blacksquare$
\end{example}

\section{Experiments}\label{sec:_4}
In this section, we aim to demonstrate the robustness and performance improvement of our proposed \texttt{Causal-DQN} under confounded settings. For a comprehensive evaluation of \texttt{Causal-DQN}, we choose twelve popular Atari games from the Gymnasium benchmark \citep{gymnasium} and design the corresponding confounded versions. See below and also \Cref{app:_c} for our detailed design of confounded Atari games. For a fair comparison, we also use vanilla DQN with little modifications as the baselines we test. More specifically, the baselines include (1) a CNN-based DQN with confounded demonstrator data (Conf. DQN), (2) an LSTM-based DQN with confounded demonstrator data (Conf. LSTM-DQN), and (3) a CNN-based DQN trained directly under masked observations without confounded data (Interv. DQN). For (1-2), the DQN agent will query the demonstrator for data samples as the \texttt{Causal-DQN} does. For (3), the DQN agent uses its own policy to sample environment transitions. 

For each game, we train the agent for 1 million environment steps. We use 20 parallel environments to collect samples. At each parallel environment step, a minibatch is sampled to train the agents, equivalent to an update frequency of 20. We use a batch size of 512, a replay buffer of 100K in size, and a learning rate of $5\mathrm{e}{-4}$ to accelerate convergence. Other hyperparameters are the same as in \cite{mnihHumanlevelControlDeep2015a}. All results presented in this section are evaluation performances where we test each trained agent in the Atari game with masked observations. Curves in \Cref{fig:_4_2_eval_curve} are generated by evaluating the agent periodically in a separate evaluation environment, not from training returns.
 
\textbf{Data Preparation and Model Architecture.} We use the standard Atari game preprocessing for the input to the agents except that we resize the input to be 64$\times$64 to align with the size requirement of the demonstrator \citep{alonso2024diffusionatariworldmodel}, a competitive actor-critic agent with deep residual blocks and LSTM layers. Other differences in input preprocessing for the demonstrator are that (1) the demonstrator takes a single colored frame as the input per each time step, and (2) the demonstrator has access to the original full screen observation while the learners only has access to a masked partial screen. For all DQN tested, we adopt Double DQN \citep{HasseltGS16doulbedqn} to stabilize learning. The CNN version follows the set up of Nature DQN \citep{mnihHumanlevelControlDeep2015a} and the LSTM version only replaces the second-to-last linear layer with an LSTM cell. See \Cref{app:_d} for detailed model architectures.

\textbf{Designing the Confounded Atari Games.} In the confounded Atari games, we mask out certain areas in each game's observation to prevent the agent from using spurious correlated features. To find such spurious visual artifacts used by the demonstrators, we apply a perturbation-based approach \citep{DBLP:conf/icml/GreydanusKDF18visatari} to visualize saliency maps of both the actor and the critic of the demonstrator \citep{alonso2024diffusionatariworldmodel}. 
As shown in \Cref{fig:_2_2_a}, in the Pong game, the demonstrator is constantly checking the score board and also the opponent's paddle locations, neither of which is necessary for winning since the opponent in Pong has a fixed policy regardless of the current score and as long as the agent shoot back the ball, there is a chance to score. Thus, an intuitive optimal policy should only look at the ball location and the agent's own paddle location to decide the move. In \Cref{fig:_2_2_b}, we mask out those areas and use the masked out observation as the state input ($s_t$) to DQNs while the masked area becomes the confounder $u_t$ which can only be observed by the demonstrator's policy, i.e., $x_t \gets f_X(s_t, u_t)$.

For the remainder of this section, we will first present a few other notable confounded Atari games and discuss the performance of our proposed \texttt{Causal-DQN} in all $12$ confounded Atari games. Specifically, despite confounding bias, our proposed causal agent is able to obtain an effective policy under masked observations from the demonstrator's trajectories. The learned policies demonstrate conservative behaviors aligned with human intuitions. Overall,  \texttt{Causal-DQN} consistently dominates its non-causal vanilla counterparts in all $12$ confounded Atari games in performance.

\vspace{-0.1in}
\begin{wrapfigure}[15]{r}{0.55\textwidth}
\vspace{-0.15in}
\centering
\hfill
        \begin{subfigure}{0.31\linewidth}\centering
		\includegraphics[width=\linewidth]{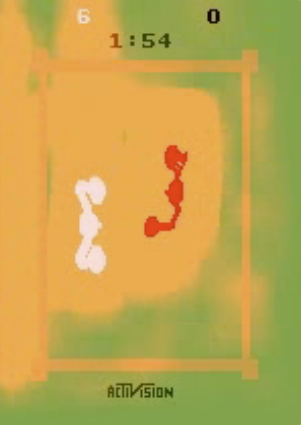}
		\caption{}
		\label{fig:_4_4_a}
	\end{subfigure}\hfill
        \begin{subfigure}{0.31\linewidth}\centering
		\includegraphics[width=\linewidth]{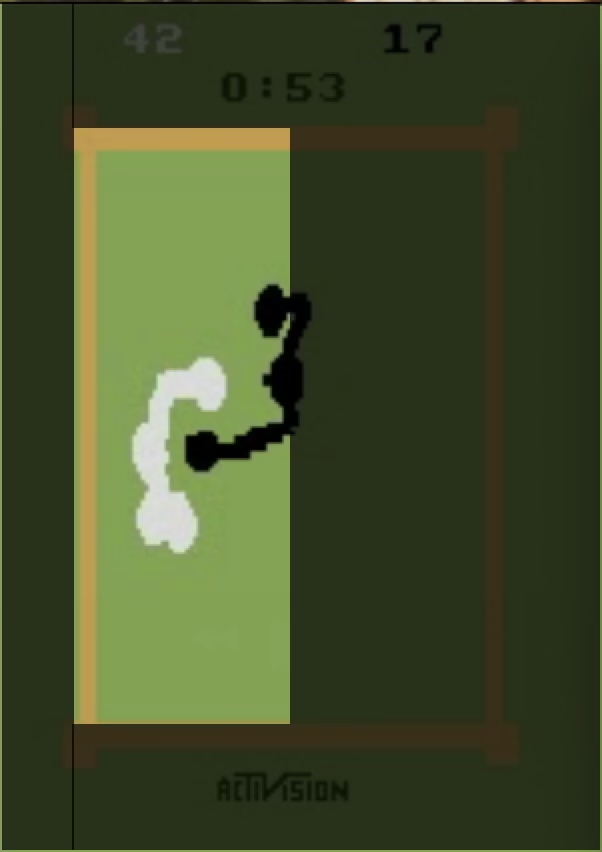}
		\caption{}
		\label{fig:_4_4_b}
	\end{subfigure}\hfill
	\begin{subfigure}{0.31\linewidth}\centering
		\includegraphics[width=\linewidth]{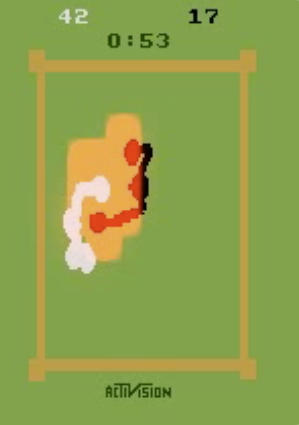}
		\caption{}
		\label{fig:_4_4_c}
	\end{subfigure}\hfill
	\caption{(\subref{fig:_4_4_a}) A saliency map of the demonstrator's policy; (\subref{fig:_4_4_b}) a confounded Boxing game where only the left half of the arena is visible; (\subref{fig:_4_4_c}) a saliency map of \texttt{Causal-DQN}'s policy.}
    \label{fig:_4_4_boxing}
\end{wrapfigure}
\paragraph{Confounded Boxing.} In the original Boxing, the player controls the white agent to punch the black one to score. 
The party with the higher score when the time runs out, or any party hitting 100 first, wins the game. The demonstrator's policy picks up an aggressive ``brawler'' style which pressures the opponent and trades blows in the center of the arena. \Cref{fig:_4_4_a} shows the saliency map of the demonstrator's policy.

In the confounded Boxing, we mask out the score/remaining time, the outer area of the arena, and the right half of the arena. In words, the agent has impaired eyesight and cannot keep track of the current score and remaining time. Our \texttt{Causal-DQN} agent picks up a conservative ``rope-the-dope'' boxing style which focuses on defending its ground on the left-hand side of the arena. \Cref{fig:_4_4_c} shows the saliency map of such a conservative policy. Perhaps surprisingly, simulation results, shown in \Cref{tab:atari_results,fig:_4_2_eval_curve}, reveal that despite the limited sensory capabilities, \texttt{Causal-DQN} agent can still achieve similar performance as the optimal demonstrator, defeating the hard-coded AI opponent.

\vspace{-0.1in}
\begin{wrapfigure}[15]{r}{0.55\textwidth}
\vspace{-0.15in}
\centering
\hfill
        \begin{subfigure}{0.31\linewidth}\centering
		\includegraphics[width=\linewidth]{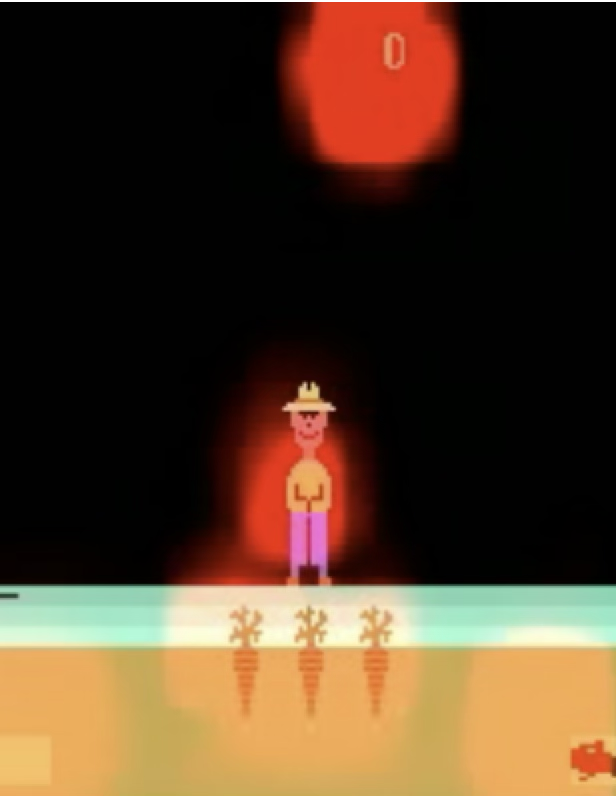}
		\caption{}
		\label{fig:_4_3_a}
	\end{subfigure}\hfill
        \begin{subfigure}{0.31\linewidth}\centering
		\includegraphics[width=\linewidth]{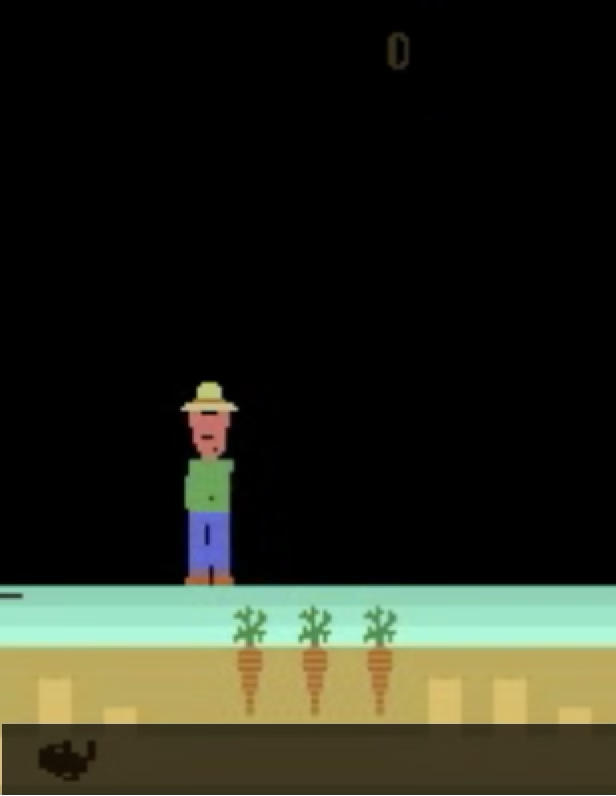}
		\caption{}
		\label{fig:_4_3_b}
	\end{subfigure}\hfill
	\begin{subfigure}{0.31\linewidth}\centering
		\includegraphics[width=\linewidth]{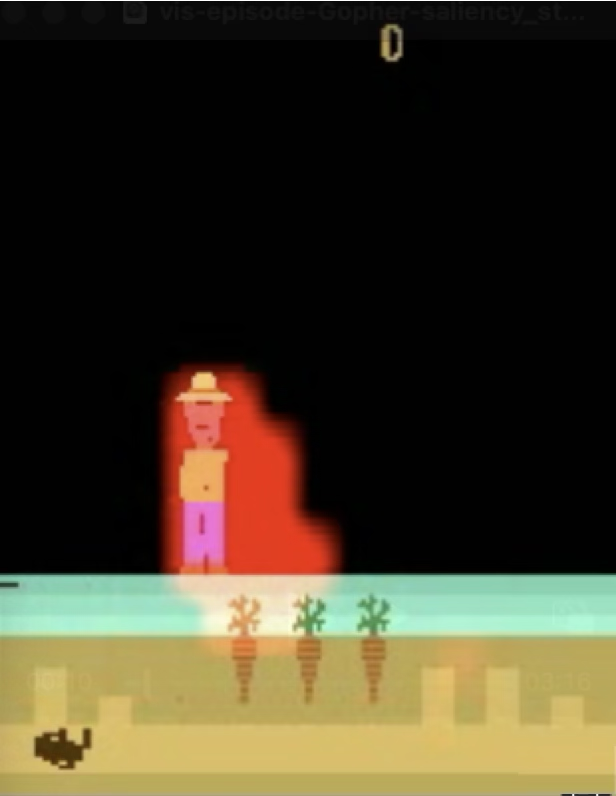}
		\caption{}
		\label{fig:_4_3_c}
	\end{subfigure}\hfill
	\caption{(\subref{fig:_4_3_a}) A saliency map of the demonstrator's policy; (\subref{fig:_4_3_b}) a confounded Gopher game where the tunnel and score are masked; (\subref{fig:_4_3_c}) a saliency map of \texttt{Causal-DQN}'s policy.}
    \label{fig:_4_3_gopher}
\end{wrapfigure}
\paragraph{Confounded Gopher.} In the Gopher game, the player controls a farmer with a shovel, tasked with protecting a garden of carrots from a mischievous gopher. The gopher repeatedly attempts to tunnel underground to steal the carrots. The player must move horizontally across the screen to block the gopher’s digging attempts by filling holes. 
A shortcut strategy is to follow the gopher's location underground closely so that the farmer is always close to those newly completed holes. \Cref{fig:_4_3_a} shows the saliency map of the demonstrator's policy. Our analysis reveals that it manages to pick up a ``proactive'' playing strategy, which actively tracks the gopher's location and uses this information to adjust the farmer's position. 

On the other hand, in the confounded Gopher game, the gophers' locations are now masked, and the agent no longer follows the same ``proactive'' strategy. Instead, our \texttt{Causal-DQN} picks up an alternative ``reactive'' strategy, which will reset the farmer's position around the center and only move when a gopher is digging out of the ground. We evaluate the learner's performance and provide them in \Cref{tab:atari_results,fig:_4_2_eval_curve}. Interestingly, simulation results show that the ``reactive'' strategy is more effective than a ``proactive'' one, and \texttt{Causal-DQN} is able to outperform the demonstrator's policy.

\textbf{Confounded ChopperCommand.} In the ChopperCommand game, the agent controls a helicopter tasked with defending a convoy of trucks from waves of enemy aircraft and helicopters. The agent must navigate across the desert landscape, shooting down enemies while avoiding incoming fire. Successfully protecting the convoy and eliminating threats increases the player's score, while allowing enemy fire to destroy the trucks results in lost points or lives. At the bottom of the screen, a mini-map/radar is showing incoming trucks and enemies. \Cref{fig:_4_3_a} shows the saliency map for the demonstrator's policy. It learns to utilize the radar information to ``look ahead.'' 

\begin{wrapfigure}[14]{r}{0.55\textwidth}
\vspace{-0.15in}
\centering
\hfill
        \begin{subfigure}{0.31\linewidth}\centering
		\includegraphics[width=\linewidth]{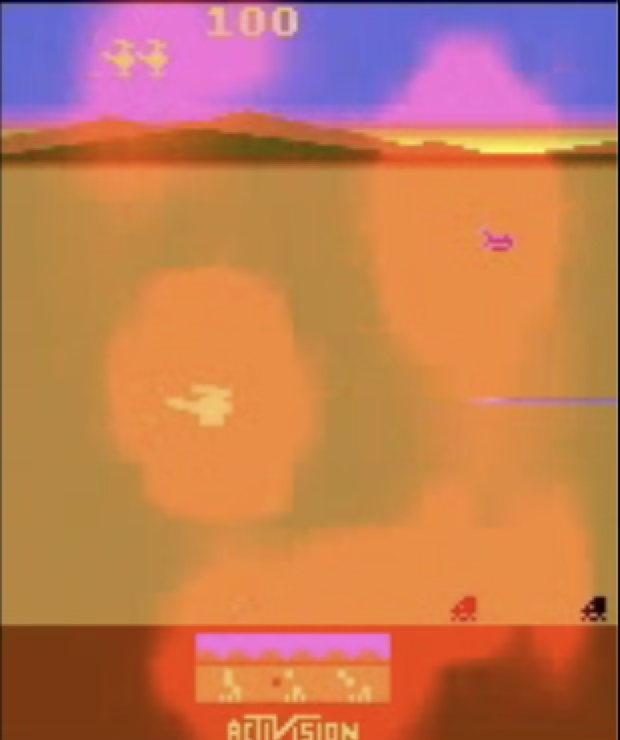}
		\caption{}
		\label{fig:_4_5_a}
	\end{subfigure}\hfill
        \begin{subfigure}{0.31\linewidth}\centering
		\includegraphics[width=\linewidth]{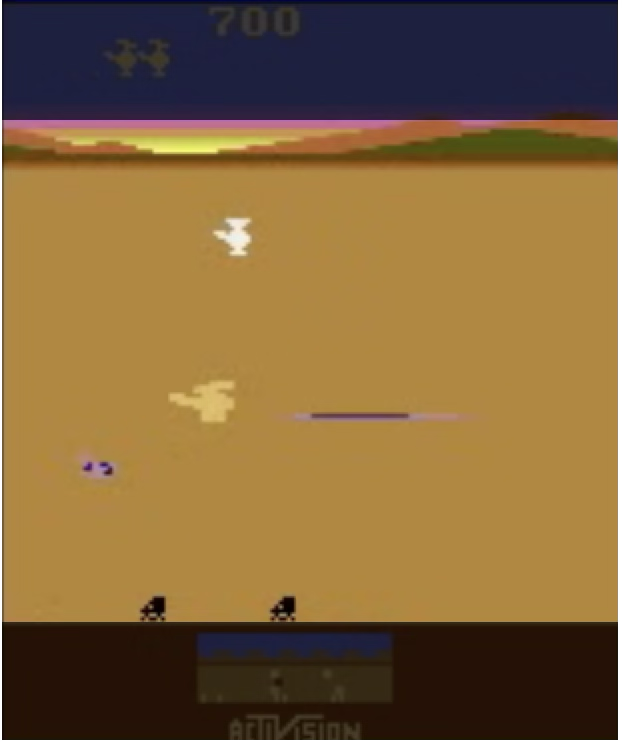}
		\caption{}
		\label{fig:_4_5_b}
	\end{subfigure}\hfill
	\begin{subfigure}{0.31\linewidth}\centering
		\includegraphics[width=\linewidth]{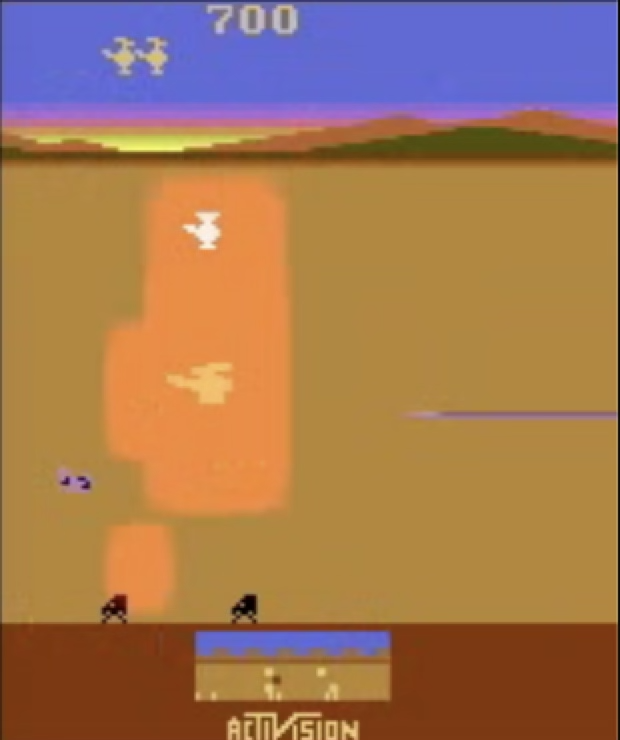}
		\caption{}
		\label{fig:_4_5_c}
	\end{subfigure}\hfill
	\caption{(\subref{fig:_4_3_a}) A saliency map of the demonstrator's policy; (\subref{fig:_4_3_b}) a confounded ChopperCommand game with the minimap and score/lives masked; (\subref{fig:_4_3_c}) a saliency map of \texttt{Causal-DQN}'s policy.}
    \label{fig:_4_5_runner}
\end{wrapfigure}
We next consider a confounded ChopperCommand game where the chopper loses its radar and other sensor devices. As a result, the mini-map area, current score, and remaining lives are masked (as shown in \Cref{fig:_4_5_b}). By applying \texttt{Causal-DQN}, the agent picks up a more ``spontaneous'' playing style, focusing on staying alive and eliminating opponents upfront. \Cref{fig:_4_5_c} describes a saliency map of this ``spontaneous'' policy where only nearby opponents are highlighted. Simulation results in \Cref{tab:atari_results,fig:_4_2_eval_curve} reveal that \texttt{Causal-DQN} outperforms other baselines significantly and even surpasses the demonstrator's policy despite having a simpler neural network architecture. 

\begin{table*}[t]
\centering
\caption{Average evaluation returns of agents on the 12 confounded Atari games trained with 1M environment steps and aggregated normalized returns concerning the demonstrator's performance. Bold numbers indicate the best-performing methods. All results are averaged over 5 seeds except that column Random is from \citep{alonso2024diffusionatariworldmodel}. \texttt{Causal-DQN} significantly outperforms other DQN baselines.}
\label{tab:atari_results}
\adjustbox{max width=\textwidth}{
\begin{tabular}{l|r|rrrrrr}
\toprule
{Game} & {Demonstrator} & {Random} & {Interv. DQN} & {Conf. DQN} & {Conf. LSTM-DQN} & \textbf{\texttt{Causal-DQN} (ours)} \\
\midrule
Amidar & 232.4 & 5.8 & 44.0 & 37.8 & 59.0 & \textbf{282.6} \\
Asterix & 3080.6 & 210.0 & 650.0 & 429.0 & 479.0 & \textbf{2587.0} \\
Boxing & 89.0 & 0.1 & -0.62 & -9.8 & -6.9 & \textbf{71.5} \\
Breakout & 219.2 & 1.7 & 2.2 & 1.2 & 4.9 & \textbf{131.2} \\

ChopperCommand & 1280.0 & 811.0 & 1192.0 & 1076.0 & 1116.0 & \textbf{1658.0} \\
Gopher & 5480.6 & 257.6 & 288.8 & 752.0 & 485.6 & \textbf{7327.2} \\
KungFuMaster & 35400.0 & 258.5 & 12416.0 & 13674.0 & 6526.0 & \textbf{44196.0} \\
MsPacman & 2316.8 & 307.3 & 1191.6 & 881.8 & 787.4 & \textbf{1747.6} \\

Pong & 20.8 & -20.7 & -20.8 & -20.8 & -20.4 & \textbf{21.0} \\
Qbert & 4420.6 & 163.9 & 322.5 & 208.5 & 253.5 & \textbf{4458.5} \\
RoadRunner & 16560.6 & 11.5 & 1154.0 & 1168.0 & 484.0 & \textbf{27414.0} \\
Seaquest & 1412.4 & 68.4 & 237.2 & 281.6 & 164.8 & \textbf{980.0} \\
\midrule
Normalized Mean ($\uparrow$) & 1.00 & 0.00 & 0.13 & 0.10 & 0.09 & \textbf{1.04} \\
Normalized Median ($\uparrow$) & 1.00 & 0.03 & 0.13 & 0.14 & 0.10 & \textbf{1.01} \\
Normalized IQM ($\uparrow$) & 1.00 & 0.03 & 0.13 & 0.13 & 0.11 & \textbf{1.02} \\
\bottomrule
\end{tabular}
}
\end{table*}
\begin{wrapfigure}[11]{r}{0.55\textwidth}
\vspace{-0.1in}
    \centering
	\includegraphics[width=\linewidth]{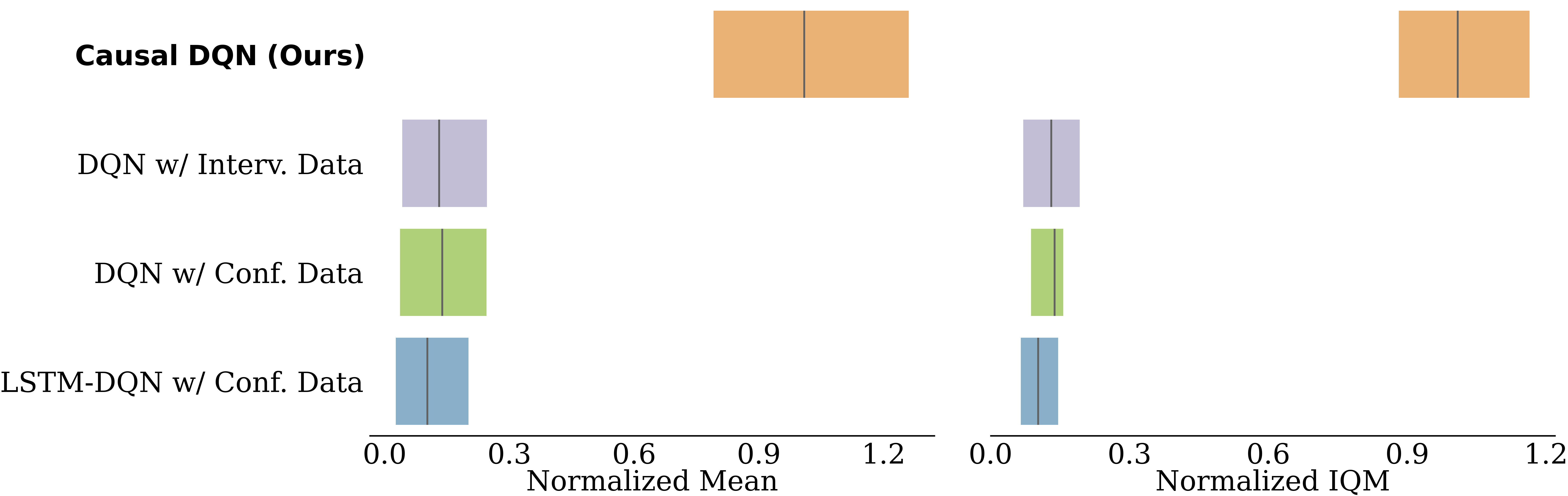}
	\caption{Normalized mean and normalized IQM scores. \texttt{Causal-DQN} achieves a normalized mean return of 1.04 and a normalized IQM of 1.02.}
    \label{fig:_4_1_bar}
\end{wrapfigure}
\textbf{Overall Performance.} \Cref{tab:atari_results} provides the best mean returns for all 12 confounded Atari games across trials, along with mean return normalized by demonstrator's performance and normalized interquantile mean (IQM). We see \texttt{Causal-DQN} consistently outperforming other non-causal baselines by a big margin. In 7/12 games, our proposed method even surpasses the demonstrator with full observations and a way more complex architecture \citep{alonso2024diffusionatariworldmodel}. 
This could be due to the reason that the demonstrator, though powerful in representation learning, may suffer from observational overfitting \citep{obsoverfitting2020Song} and rely on spurious visual features to make decisions. Both prior work and our work have empirically verified this by using saliency maps \citep{DBLP:conf/icml/GreydanusKDF18visatari}. 
Masking out those spurious features indeed poses a non-trivial challenge to the DQN. From \Cref{tab:atari_results}, we see that the vanilla DQN with interventional data under masked observations (Interv. DQN) hardly achieves any meaningful scores in 1 million environment steps. Even with the help of LSTM cells or confounded data from a performing demonstrator, the DQN still cannot learn. Only with the causal bound, the same architecture (Nature DQN) can recover or even surpass the demonstrator's performance despite using masked observations and a shallow, small CNN as the feature extractor. In \Cref{fig:_4_1_bar}, we also provide stratified bootstrap confidence intervals for the normalized mean and normalized IQM scores as recommended by Agarwal et al. \citep{agarwal2021deep}. We can see clearly that our proposed \texttt{Causal-DQN} can recover the expert demonstrator's performance even under impaired sensors. And to get a sense of the sample efficiency of \texttt{Causal-DQN}, in \Cref{fig:_4_2_eval_curve}, we report the evaluation performance during training of all 12 confounded Atari games. \texttt{Causal-DQN} converges uniformly in less than 1 million steps in all games. See \Cref{app:_e} for more results.

\paragraph{Remark on the failure of DQN baselines.} We have conducted thorough hyper-parameter tuning and used the recommended hyper-parameter settings from the original DQN work~\citep{mnihHumanlevelControlDeep2015a}. Due to limited time and computational resources, we are only able to finish 1M steps of training for all the environments and seeds. However, we do notice that the interventional DQN baseline can gradually learn the right policy with longer training time. The learning curve usually starts to grow after 3M steps. This corroborates the validity of our confounded environment design, that the learning task is now harder to solve, but not totally impossible. With the help of a causally aware learner (\texttt{Causal-DQN}), one would be able to learn more sample-efficiently (3x fewer steps in our experiments) than the pure interventional online regime or the biased causally unaware offline learners.

\begin{figure}
    \centering
    \includegraphics[width=\linewidth]{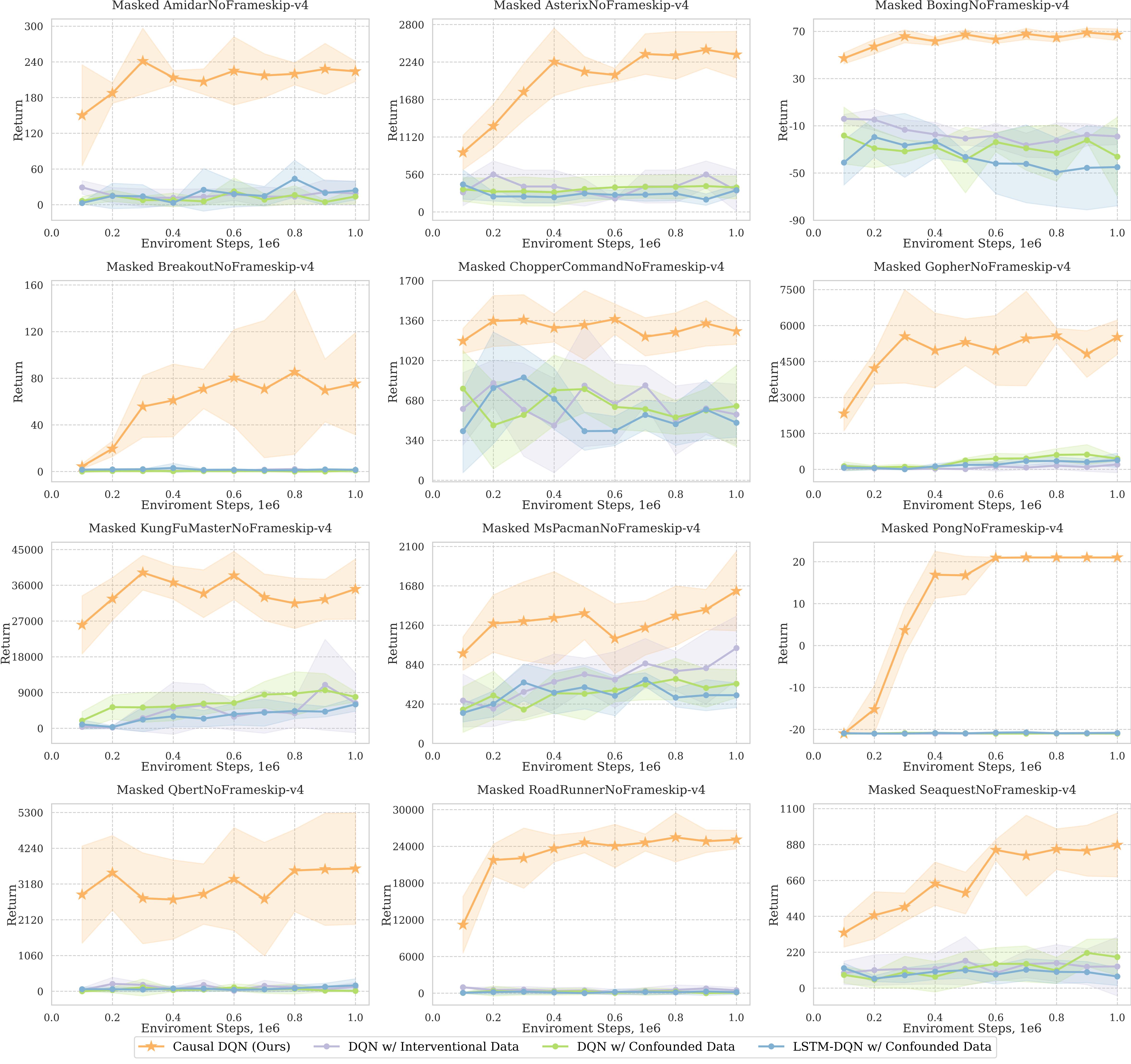}
    \caption{Average evaluation performance for 12 confounded Atari games. During training over the 1M environment steps, we evaluate the agent every 100K steps for 10 episodes each time. The curve is further averaged over 5 seeds with one standard deviation across trials as the shaded area.} 
    \label{fig:_4_2_eval_curve}
	\vspace{-0.2in}
\end{figure}

\section{Conclusions}\label{sec:_5}
This paper investigates deep reinforcement learning from off-policy data collected by a different behavior policy through a causal lens. Particularly, we focus on a generalized setting where confounding biases cannot be ruled out \emph{a priori}, which poses significant challenges to standard off-policy evaluation algorithms. We first extend the celebrated Bellman equation to the causal Bellman equation that lower bounds the agent's expected return from confounded observations. Building on this extension, we then propose a novel \texttt{Causal-DQN} algorithm that could obtain an effective policy from off-policy data even when unobserved confounders generally exist. Finally, we evaluate our proposed algorithm in twelve confounded Atari games, showing that the causal approach consistently dominates the standard DQN algorithm with different feature extractors or data sources. 

Yet the implications of this work extend far beyond discrete control benchmarks. Unobserved confounding is not an anomaly but a pervasive property of real-world RL. It lurks beneath virtually all forms of observational or off-policy data, from robotic demonstrations to human feedback, silently distorting the mapping between actions, rewards, and outcomes. In an era when the field is increasingly guided by the scaling law, the belief that enlarging models and datasets will automatically yield better intelligence, our findings reveal a critical blind spot: scaling on confounded data does not scale performance. In fact, it may amplify biases, producing policies that are efficient yet misaligned.

Consider large-scale robotic pretraining from internet videos: behavioral policies (human or robot demonstrators) differ sharply from the learner’s policy space, introducing systematic unobserved confounding between observations and intended actions. Similarly, in reinforcement learning from human feedback (RLHF) for aligning large language models (LLMs), preference datasets are deeply entangled with hidden factors like emotions, social norms, cultural context, temporal inconsistency, and individual baselines that no simple text prompt can fully encode. Without a causal understanding of these factors, LLM alignment merely approximates correlations within preference and prompts, not their causal origins. The same problem is exacerbated for healthcare, finance, and any domain where observational data conceals the determining forces driving decisions.

Toward the future, we envision causal reinforcement learning as a foundation for building confounding-robust, safely-aligned, and generalizable agents. Extending the causal Bellman framework to policy-gradient methods, continuous control, RLHF, and multi-agent systems will be crucial steps toward this goal. Ultimately, bridging causal inference and deep RL offers more than robustness. It paves the way for agents that reason about interventions and consequences, rather than merely fitting experience. In sum, this work marks an early yet essential stride toward causally grounded agents, a future where RL agents learn not only what to do, but why it works.

\section*{Acknowledgments}
This research is supported in part by the NSF, ONR, AFOSR, DoE, Amazon, JP Morgan, and The Alfred P. Sloan Foundation.

\bibliography{bibwithID,extra}
\bibliographystyle{abbrv}

\newpage
\section*{NeurIPS Paper Checklist}
\begin{enumerate}

\item {\bf Claims}
    \item[] Question: Do the main claims made in the abstract and introduction accurately reflect the paper's contributions and scope?
    \item[] Answer:  \answerYes{}
    \item[] Justification: All claims are supported by both theoretical proofs (\Cref{prop:_3_1}) and empirical results on 12 confounded Atari games reported in \Cref{sec:_4}.
    \item[] Guidelines:
    \begin{itemize}
        \item The answer NA means that the abstract and introduction do not include the claims made in the paper.
        \item The abstract and/or introduction should clearly state the claims made, including the contributions made in the paper and important assumptions and limitations. A No or NA answer to this question will not be perceived well by the reviewers. 
        \item The claims made should match theoretical and experimental results, and reflect how much the results can be expected to generalize to other settings. 
        \item It is fine to include aspirational goals as motivation as long as it is clear that these goals are not attained by the paper. 
    \end{itemize}

\item {\bf Limitations}
    \item[] Question: Does the paper discuss the limitations of the work performed by the authors?
    \item[] Answer: \answerYes{} 
    \item[] Justification: We discuss the limitations in a separate section in the appendix (\Cref{app:limitation}) and shed light on possible future works in the conclusion section (\Cref{sec:_5}). For the experiment, we entail the computational requirements in \Cref{app:_d} and the environment setup, data preparation in both \Cref{sec:_4} and \Cref{app:_d}.
    \item[] Guidelines:
    \begin{itemize}
        \item The answer NA means that the paper has no limitation while the answer No means that the paper has limitations, but those are not discussed in the paper. 
        \item The authors are encouraged to create a separate "Limitations" section in their paper.
        \item The paper should point out any strong assumptions and how robust the results are to violations of these assumptions (e.g., independence assumptions, noiseless settings, model well-specification, asymptotic approximations only holding locally). The authors should reflect on how these assumptions might be violated in practice and what the implications would be.
        \item The authors should reflect on the scope of the claims made, e.g., if the approach was only tested on a few datasets or with a few runs. In general, empirical results often depend on implicit assumptions, which should be articulated.
        \item The authors should reflect on the factors that influence the performance of the approach. For example, a facial recognition algorithm may perform poorly when image resolution is low or images are taken in low lighting. Or a speech-to-text system might not be used reliably to provide closed captions for online lectures because it fails to handle technical jargon.
        \item The authors should discuss the computational efficiency of the proposed algorithms and how they scale with dataset size.
        \item If applicable, the authors should discuss possible limitations of their approach to address problems of privacy and fairness.
        \item While the authors might fear that complete honesty about limitations might be used by reviewers as grounds for rejection, a worse outcome might be that reviewers discover limitations that aren't acknowledged in the paper. The authors should use their best judgment and recognize that individual actions in favor of transparency play an important role in developing norms that preserve the integrity of the community. Reviewers will be specifically instructed to not penalize honesty concerning limitations.
    \end{itemize}

\item {\bf Theory assumptions and proofs}
    \item[] Question: For each theoretical result, does the paper provide the full set of assumptions and a complete (and correct) proof?
    \item[] Answer: \answerYes{} 
    \item[] Justification: The detailed assumptions and proof setups are in \Cref{sec:_2} and \Cref{sec:_3}. The proof detail for \Cref{prop:_3_1} is in \Cref{app:_b}.
    \item[] Guidelines:
    \begin{itemize}
        \item The answer NA means that the paper does not include theoretical results. 
        \item All the theorems, formulas, and proofs in the paper should be numbered and cross-referenced.
        \item All assumptions should be clearly stated or referenced in the statement of any theorems.
        \item The proofs can either appear in the main paper or the supplemental material, but if they appear in the supplemental material, the authors are encouraged to provide a short proof sketch to provide intuition. 
        \item Inversely, any informal proof provided in the core of the paper should be complemented by formal proofs provided in appendix or supplemental material.
        \item Theorems and Lemmas that the proof relies upon should be properly referenced. 
    \end{itemize}

    \item {\bf Experimental result reproducibility}
    \item[] Question: Does the paper fully disclose all the information needed to reproduce the main experimental results of the paper to the extent that it affects the main claims and/or conclusions of the paper (regardless of whether the code and data are provided or not)?
    \item[] Answer: \answerYes{} 
    \item[] Justification: We provide all the necessary details for reproducing our work including preparing confounded Atari games, neural network architectures, training hyper-parameters and pseudo-code in \Cref{alg:_3_cdqn} (and a more refined version \Cref{alg:_4_cdqn}) in both \Cref{sec:_4} and \Cref{app:_d}.
    \item[] Guidelines:
    \begin{itemize}
        \item The answer NA means that the paper does not include experiments.
        \item If the paper includes experiments, a No answer to this question will not be perceived well by the reviewers: Making the paper reproducible is important, regardless of whether the code and data are provided or not.
        \item If the contribution is a dataset and/or model, the authors should describe the steps taken to make their results reproducible or verifiable. 
        \item Depending on the contribution, reproducibility can be accomplished in various ways. For example, if the contribution is a novel architecture, describing the architecture fully might suffice, or if the contribution is a specific model and empirical evaluation, it may be necessary to either make it possible for others to replicate the model with the same dataset, or provide access to the model. In general. releasing code and data is often one good way to accomplish this, but reproducibility can also be provided via detailed instructions for how to replicate the results, access to a hosted model (e.g., in the case of a large language model), releasing of a model checkpoint, or other means that are appropriate to the research performed.
        \item While NeurIPS does not require releasing code, the conference does require all submissions to provide some reasonable avenue for reproducibility, which may depend on the nature of the contribution. For example
        \begin{enumerate}
            \item If the contribution is primarily a new algorithm, the paper should make it clear how to reproduce that algorithm.
            \item If the contribution is primarily a new model architecture, the paper should describe the architecture clearly and fully.
            \item If the contribution is a new model (e.g., a large language model), then there should either be a way to access this model for reproducing the results or a way to reproduce the model (e.g., with an open-source dataset or instructions for how to construct the dataset).
            \item We recognize that reproducibility may be tricky in some cases, in which case authors are welcome to describe the particular way they provide for reproducibility. In the case of closed-source models, it may be that access to the model is limited in some way (e.g., to registered users), but it should be possible for other researchers to have some path to reproducing or verifying the results.
        \end{enumerate}
    \end{itemize}

\item {\bf Open access to data and code}
    \item[] Question: Does the paper provide open access to the data and code, with sufficient instructions to faithfully reproduce the main experimental results, as described in supplemental material?
    \item[] Answer: \answerYes{} 
    \item[] Justification: Our experiments are based on a open benchmark (Atari from Gymnasium environments). And the detailed parameters and setups for generating confounded Atari games are also reported in \Cref{sec:_4}. The demonstrator generating the confounded data is also an open sourced model, see \href{https://github.com/eloialonso/diamond}{https://github.com/eloialonso/diamond}.
    \item[] Guidelines:
    \begin{itemize}
        \item The answer NA means that paper does not include experiments requiring code.
        \item Please see the NeurIPS code and data submission guidelines (\url{https://nips.cc/public/guides/CodeSubmissionPolicy}) for more details.
        \item While we encourage the release of code and data, we understand that this might not be possible, so “No” is an acceptable answer. Papers cannot be rejected simply for not including code, unless this is central to the contribution (e.g., for a new open-source benchmark).
        \item The instructions should contain the exact command and environment needed to run to reproduce the results. See the NeurIPS code and data submission guidelines (\url{https://nips.cc/public/guides/CodeSubmissionPolicy}) for more details.
        \item The authors should provide instructions on data access and preparation, including how to access the raw data, preprocessed data, intermediate data, and generated data, etc.
        \item The authors should provide scripts to reproduce all experimental results for the new proposed method and baselines. If only a subset of experiments are reproducible, they should state which ones are omitted from the script and why.
        \item At submission time, to preserve anonymity, the authors should release anonymized versions (if applicable).
        \item Providing as much information as possible in supplemental material (appended to the paper) is recommended, but including URLs to data and code is permitted.
    \end{itemize}

\item {\bf Experimental setting/details}
    \item[] Question: Does the paper specify all the training and test details (e.g., data splits, hyperparameters, how they were chosen, type of optimizer, etc.) necessary to understand the results?
    \item[] Answer: \answerYes{} 
    \item[] Justification: We provide the full details of training/evaluation pipeline including model architectures, hyper-parameters, data preparation and evaluation metrics in \Cref{sec:_4} and \Cref{app:_d}.
    \item[] Guidelines:
    \begin{itemize}
        \item The answer NA means that the paper does not include experiments.
        \item The experimental setting should be presented in the core of the paper to a level of detail that is necessary to appreciate the results and make sense of them.
        \item The full details can be provided either with the code, in appendix, or as supplemental material.
    \end{itemize}

\item {\bf Experiment statistical significance}
    \item[] Question: Does the paper report error bars suitably and correctly defined or other appropriate information about the statistical significance of the experiments?
    \item[] Answer: \answerYes{} 
    \item[] Justification: We follow the recommended evaluation protocol in \citep{agarwal2021deep}. We provide both mean and IQM on the returns (\Cref{tab:atari_results}, \Cref{fig:_4_1_bar}). In \Cref{fig:_4_2_eval_curve}, we also report standard deviation as the shaded area around curves. 
    \item[] Guidelines:
    \begin{itemize}
        \item The answer NA means that the paper does not include experiments.
        \item The authors should answer "Yes" if the results are accompanied by error bars, confidence intervals, or statistical significance tests, at least for the experiments that support the main claims of the paper.
        \item The factors of variability that the error bars are capturing should be clearly stated (for example, train/test split, initialization, random drawing of some parameter, or overall run with given experimental conditions).
        \item The method for calculating the error bars should be explained (closed form formula, call to a library function, bootstrap, etc.)
        \item The assumptions made should be given (e.g., Normally distributed errors).
        \item It should be clear whether the error bar is the standard deviation or the standard error of the mean.
        \item It is OK to report 1-sigma error bars, but one should state it. The authors should preferably report a 2-sigma error bar than state that they have a 96\% CI, if the hypothesis of Normality of errors is not verified.
        \item For asymmetric distributions, the authors should be careful not to show in tables or figures symmetric error bars that would yield results that are out of range (e.g. negative error rates).
        \item If error bars are reported in tables or plots, The authors should explain in the text how they were calculated and reference the corresponding figures or tables in the text.
    \end{itemize}

\item {\bf Experiments compute resources}
    \item[] Question: For each experiment, does the paper provide sufficient information on the computer resources (type of compute workers, memory, time of execution) needed to reproduce the experiments?
    \item[] Answer: \answerYes{} 
    \item[] Justification: We provide the computational resources used in each experiment in \Cref{app:_d}.
    \item[] Guidelines:
    \begin{itemize}
        \item The answer NA means that the paper does not include experiments.
        \item The paper should indicate the type of compute workers CPU or GPU, internal cluster, or cloud provider, including relevant memory and storage.
        \item The paper should provide the amount of compute required for each of the individual experimental runs as well as estimate the total compute. 
        \item The paper should disclose whether the full research project required more compute than the experiments reported in the paper (e.g., preliminary or failed experiments that didn't make it into the paper). 
    \end{itemize}
    
\item {\bf Code of ethics}
    \item[] Question: Does the research conducted in the paper conform, in every respect, with the NeurIPS Code of Ethics \url{https://neurips.cc/public/EthicsGuidelines}?
    \item[] Answer: \answerYes{} 
    \item[] Justification: We do not foresee any harms caused by the research process since no human subjects are involved in our work, and the environments we use are open-sourced Atari games. We also do not foresee any direct harmful societal impact of our work.
    \item[] Guidelines:
    \begin{itemize}
        \item The answer NA means that the authors have not reviewed the NeurIPS Code of Ethics.
        \item If the authors answer No, they should explain the special circumstances that require a deviation from the Code of Ethics.
        \item The authors should make sure to preserve anonymity (e.g., if there is a special consideration due to laws or regulations in their jurisdiction).
    \end{itemize}

\item {\bf Broader impacts}
    \item[] Question: Does the paper discuss both potential positive societal impacts and negative societal impacts of the work performed?
    \item[] Answer: \answerYes{} 
    \item[] Justification: We discuss the motivation and contributions in the introduction section (\Cref{sec:_1}) under a broad picture. We also discuss the social impacts further in \Cref{app:_f}.
    \item[] Guidelines:
    \begin{itemize}
        \item The answer NA means that there is no societal impact of the work performed.
        \item If the authors answer NA or No, they should explain why their work has no societal impact or why the paper does not address societal impact.
        \item Examples of negative societal impacts include potential malicious or unintended uses (e.g., disinformation, generating fake profiles, surveillance), fairness considerations (e.g., deployment of technologies that could make decisions that unfairly impact specific groups), privacy considerations, and security considerations.
        \item The conference expects that many papers will be foundational research and not tied to particular applications, let alone deployments. However, if there is a direct path to any negative applications, the authors should point it out. For example, it is legitimate to point out that an improvement in the quality of generative models could be used to generate deepfakes for disinformation. On the other hand, it is not needed to point out that a generic algorithm for optimizing neural networks could enable people to train models that generate Deepfakes faster.
        \item The authors should consider possible harms that could arise when the technology is being used as intended and functioning correctly, harms that could arise when the technology is being used as intended but gives incorrect results, and harms following from (intentional or unintentional) misuse of the technology.
        \item If there are negative societal impacts, the authors could also discuss possible mitigation strategies (e.g., gated release of models, providing defenses in addition to attacks, mechanisms for monitoring misuse, mechanisms to monitor how a system learns from feedback over time, improving the efficiency and accessibility of ML).
    \end{itemize}
    
\item {\bf Safeguards}
    \item[] Question: Does the paper describe safeguards that have been put in place for responsible release of data or models that have a high risk for misuse (e.g., pretrained language models, image generators, or scraped datasets)?
    \item[] Answer: \answerNA{} 
    \item[] Justification: We will only release the trained confounding robust Atari game agents, which we believe won't pose such risks.
    \item[] Guidelines:
    \begin{itemize}
        \item The answer NA means that the paper poses no such risks.
        \item Released models that have a high risk for misuse or dual-use should be released with necessary safeguards to allow for controlled use of the model, for example by requiring that users adhere to usage guidelines or restrictions to access the model or implementing safety filters. 
        \item Datasets that have been scraped from the Internet could pose safety risks. The authors should describe how they avoided releasing unsafe images.
        \item We recognize that providing effective safeguards is challenging, and many papers do not require this, but we encourage authors to take this into account and make a best faith effort.
    \end{itemize}

\item {\bf Licenses for existing assets}
    \item[] Question: Are the creators or original owners of assets (e.g., code, data, models), used in the paper, properly credited and are the license and terms of use explicitly mentioned and properly respected?
    \item[] Answer: \answerYes{} 
    \item[] Justification: For the Atari games we cite the Gymnasium suite in \Cref{sec:_4} and for the demonstrator model we also cite the original paer in \Cref{sec:_4}.
    \item[] Guidelines:
    \begin{itemize}
        \item The answer NA means that the paper does not use existing assets.
        \item The authors should cite the original paper that produced the code package or dataset.
        \item The authors should state which version of the asset is used and, if possible, include a URL.
        \item The name of the license (e.g., CC-BY 4.0) should be included for each asset.
        \item For scraped data from a particular source (e.g., website), the copyright and terms of service of that source should be provided.
        \item If assets are released, the license, copyright information, and terms of use in the package should be provided. For popular datasets, \url{paperswithcode.com/datasets} has curated licenses for some datasets. Their licensing guide can help determine the license of a dataset.
        \item For existing datasets that are re-packaged, both the original license and the license of the derived asset (if it has changed) should be provided.
        \item If this information is not available online, the authors are encouraged to reach out to the asset's creators.
    \end{itemize}

\item {\bf New assets}
    \item[] Question: Are new assets introduced in the paper well documented and is the documentation provided alongside the assets?
    \item[] Answer: \answerYes{} 
    \item[] Justification: We introduce a new set of confounded Atari games with demonstrator data. We have listed all details required to generate such games in \Cref{sec:_4} and \Cref{app:_d}. 
    \item[] Guidelines:
    \begin{itemize}
        \item The answer NA means that the paper does not release new assets.
        \item Researchers should communicate the details of the dataset/code/model as part of their submissions via structured templates. This includes details about training, license, limitations, etc. 
        \item The paper should discuss whether and how consent was obtained from people whose asset is used.
        \item At submission time, remember to anonymize your assets (if applicable). You can either create an anonymized URL or include an anonymized zip file.
    \end{itemize}

\item {\bf Crowdsourcing and research with human subjects}
    \item[] Question: For crowdsourcing experiments and research with human subjects, does the paper include the full text of instructions given to participants and screenshots, if applicable, as well as details about compensation (if any)? 
    \item[] Answer: \answerNA{} 
    \item[] Justification: Our work does not involve crowdsourcing nor research with human objects.
    \item[] Guidelines:
    \begin{itemize}
        \item The answer NA means that the paper does not involve crowdsourcing nor research with human subjects.
        \item Including this information in the supplemental material is fine, but if the main contribution of the paper involves human subjects, then as much detail as possible should be included in the main paper. 
        \item According to the NeurIPS Code of Ethics, workers involved in data collection, curation, or other labor should be paid at least the minimum wage in the country of the data collector. 
    \end{itemize}

\item {\bf Institutional review board (IRB) approvals or equivalent for research with human subjects}
    \item[] Question: Does the paper describe potential risks incurred by study participants, whether such risks were disclosed to the subjects, and whether Institutional Review Board (IRB) approvals (or an equivalent approval/review based on the requirements of your country or institution) were obtained?
    \item[] Answer: \answerNA{} 
    \item[] Justification: Our paper does not involve crowdsourcing nor research with human subjects.
    \item[] Guidelines:
    \begin{itemize}
        \item The answer NA means that the paper does not involve crowdsourcing nor research with human subjects.
        \item Depending on the country in which research is conducted, IRB approval (or equivalent) may be required for any human subjects research. If you obtained IRB approval, you should clearly state this in the paper. 
        \item We recognize that the procedures for this may vary significantly between institutions and locations, and we expect authors to adhere to the NeurIPS Code of Ethics and the guidelines for their institution. 
        \item For initial submissions, do not include any information that would break anonymity (if applicable), such as the institution conducting the review.
    \end{itemize}

\item {\bf Declaration of LLM usage}
    \item[] Question: Does the paper describe the usage of LLMs if it is an important, original, or non-standard component of the core methods in this research? Note that if the LLM is used only for writing, editing, or formatting purposes and does not impact the core methodology, scientific rigorousness, or originality of the research, declaration is not required.
    \item[] Answer: \answerNA{} 
    \item[] Justification: The core method development in this research does not involve LLMs as any important, original, or non-standard components.
    \item[] Guidelines:
    \begin{itemize}
        \item The answer NA means that the core method development in this research does not involve LLMs as any important, original, or non-standard components.
        \item Please refer to our LLM policy (\url{https://neurips.cc/Conferences/2025/LLM}) for what should or should not be described.
    \end{itemize}

\end{enumerate}

\newpage
\appendix
\section*{Appendices}
\DoToC

\section{Related Work}\label{app:_a}
\noindent \textbf{Off-Policy Learning.} Off-policy learning has a long history in RL dating back to the classic algorithms of Q-learning \citep{watkins1989learning,watkins-qlearning}, importance sampling \citep{swaminathan2015counterfactual,jiang2015doubly}, and temporal difference \citep{precup2000eligibility,munos2016safe}. Recently, people also propose to utilize offline datasets to warm start the training~\citep{calql,DBLP:journals/corr/abs-2305-14550,CQL2020Kumar}, augmenting online training replay buffer~\citep{hybridrl,efficientoffon} or incorporating imitation loss with offline data~\citep{DBLP:conf/icml/KangJF18,DBLP:conf/rss/Zhu0MRECTKHFH18}.
However, these work rely on a critical assumption that there is no unobserved confounders in the environment. While this assumption is generally true when the off-policy data is collected by an interventional agent, data generated by potentially unknown sources can easily break this assumption~\citep{levine2020offline}. We will introduce more on the confounding robust off-policy learning next.

\noindent \textbf{Causal Reinforcement Learning for Off-Policy Learning} 
When the no unobserved confounding assumption does not hold, one would need to either identify the reward and transition distributions before evaluating policy values or bound the possible policy values. 
There is a rich line of literature in identifying policy values directly from confounded data~\citep{DBLP:conf/icml/ShiUHJ22,DBLP:conf/nips/MiaoQZ22,DBLP:conf/icml/GuoCZYW22,DBLP:journals/ior/BennettK24}. But they usually invoke other critical learning assumptions such as the existence of bridge functions in the line of proximal causal inference literature \citep{tchetgen2020proximalcausalinference}. On the other hand, without further assumptions, one can utilize the bounding method to account for the whole range of possible policy values. Seminal work of Manski \citep{Manski1989NonparametricBO} developed the first bounds on causal effects in non-identifiable settings using observational data in the single-stage treatment model with contextual information (i.e., a contextual bandit model). These bounds were then expanded to the instrumental variable setting \citep{balke:pea97,imbens1994identification}, to partially identify counterfactual probabilities of causation \citep{DBLP:journals/amai/TianP00probcausationbound}, to construct reward shaping functions automatically \citep{li2025confoundedshaping}. This work is inspired by a recent work of Zhang \& Bareinboim in partially identifying the policy values via bounding~\citep{zhang2025eligibility}. 

\noindent \textbf{Robust Reinforcement Learning} Unlike reinforcement learning in standard MDPs, robust reinforcement learning assumes that the parametrization of the transition probability function is contained in a set of model parameters which is called the uncertainty set \citep{iyengar2005robust,nilim2005robust,xu2010distributionally,wiesemann2013robust,yu2015distributionally,DBLP:journals/mor/LimXM16robust,10.5555/3454287.3454920beyond}. The goal of the agent is to learn a robust policy that performs the best under the worst possible case in the uncertainty set. Similar problems have been studied under the rubrics of safe policy learning \citep{pmlr-v37-thomas15highconf,DBLP:conf/nips/GhavamzadehPC16safepolicy} or pessimistic reinforcement learning \citep{lupessimism}.
Robust RL algorithms with provable guarantees have been proposed in tabular settings or under the assumptions of linear functions \citep{lim2013reinforcement,tamar2014scaling,roy2017reinforcement,badrinath2021robust,wang2021online}. Combined with the computational framework of deep learning, robust RL algorithms have been extended to complex, high-dimensional domains \citep{pintoRobustAdversarialReinforcement2017,zhang2020robustdrl}. More recently, \citep{panaganti2022robust} proposed Robust Fitted Q-Iteration (RFQI) to learn the best possible robust policy from offline data with theoretical guarantees on the performance of the learned policy. Our work differs from robust RL methods since it does not require a pre-specified uncertainty set of model parameters. Instead, we construct the ignorance region over the underlying system dynamics from the confounded observational data using partial causal identification. Based on the learned uncertainty set, we then derived closed-form bounds over the optimal interventional Q-value functions.

\noindent \textbf{Model-Free Deep Reinforcement Learning}
Model-free deep reinforcement learning has seen substantial advancements through the evolution of value-based methods, particularly those building on the foundational Deep Q-Network (DQN)\citep{mnihHumanlevelControlDeep2015a}. While DQN introduced the use of deep neural networks to approximate Q-values from high-dimensional sensory inputs, it suffered from issues such as overestimation bias, instability, and poor data efficiency. These challenges prompted a series of algorithmic improvements. Double DQN \citep{HasseltGS16doulbedqn} addresses overestimation by separating action selection from action evaluation, while Dueling DQN \citep{DBLP:conf/icml/WangSHHLF16duelingdqn} improves representational efficiency by decoupling value and advantage estimation. Prioritized Experience Replay \citep{DBLP:journals/corr/SchaulQAS15prioritizedexpreplay} and Hindsight Experience Replay \citep{Andrychowicz2017HindsightER} enhances data efficiency by sampling more informative transitions. Rainbow DQN \citep{hesselRainbowCombiningImprovements2018a} effectively combines these ingredients, along with multi-step learning, distributional Q-learning, and noisy networks, resulting in one of the strongest baselines in Atari environments.

To scale value-based methods to more complex tasks and hardware infrastructures, subsequent works introduced distributed and more flexible architectures. IMPALA (Importance Weighted Actor-Learner Architectures) \citep{espeholtIMPALAScalableDistributed2018a} tackles the inefficiency of distributed policy learning by decoupling acting and learning processes and correcting the resulting off-policy updates with the V-trace algorithm, enabling efficient parallel training at scale. R2D2 (Recurrent Experience Replay in Distributed Reinforcement Learning) \citep{DBLP:conf/iclr/KapturowskiOQMD19r2d2} enables recurrent architectures to be trained off-policy in distributed settings with experience replay, supporting partial observability and long time horizons. Agent57 \citep{badiaAgent57OutperformingAtari2020a} further builds on R2D2, combining exploration bonuses from Random Network Distillation \citep{DBLP:conf/iclr/BurdaESK19RandomNetworkDistillation}, meta-learning of exploration strategies, and population-based training, becoming the first agent to surpass human performance on all 57 Atari games. More recently, the Bigger, Better, Faster (BBF) framework \citep{DBLP:conf/icml/SchwarzerOCBAC23bbf} pushes the scalability frontier by optimizing infrastructure, neural network design, and training pipelines, enabling the training of large-scale agents with dramatically improved performance and sample efficiency. These innovations collectively mark a significant advancement toward more scalable and generalizable value-based deep reinforcement learning agents.

Note that our proposed Causal Bellman Optimality Equation does not require specific reinforcement learning algorithm implementations. In this work, we choose DQN as the base algorithm only for its simplicity given our goal of showcasing a practical implementation of the proposed result in a straightforward way. It is possible to extending the causal Q-learning update in \Cref{eq:_3_update} to be used with more advanced deep RL algorithms (e.g., Rainbow \citep{hesselRainbowCombiningImprovements2018a}, IMPALA \citep{espeholtIMPALAScalableDistributed2018a}, and BBF \citep{DBLP:conf/icml/SchwarzerOCBAC23bbf}) so that one could enable more powerful agents in confounding settings, which we will leave for future work.

\section{Proof Details}\label{app:_b}
This section entails the proof for \Cref{prop:_3_1}.
We also prove that our Causal Bellman Optimal Equation (lower bound) has a unique fixed point that is a valid lower bound. 
\begin{proposition}[Causal Bellman Optimal Equation (\cref{prop:_3_1})]
\label{prop:causal bellman eq appendix}
    For a CMDP environment $\M$ with reward signals $Y_t \in [a, b] \subseteq \3R$, its optimal state-action value function $Q_{*}(s, x) \geq \underline{Q_{*}}(s, x)$ for any state-action pair $(s, x) \in \1S \times \1X$, where the lower bound $\underline{Q_{*}}(s, x)$ is given by as follows,
	\begin{align}
		\underline{Q_{*}}(s, x) =  P(x\mid s)\bigg (\widetilde{\1R}\Parens{s, x} + \gamma \sum_{s', x'} \widetilde{\1T}\Parens{s, x, s'} \max_{x'} \underline{Q_{*}}(s', x') \bigg ) & \label{eq:app_3_lower_q_1} \\
		+ P(\neg x \mid s) \bigg (a + \gamma \min_{s'}\max_{x'} \underline{Q_{*}}(s', x') \bigg ) & \label{eq:app_3_lower_q_2}
	\end{align}
        where $P(x \mid s) = P\Parens{X_t = x \mid S_t = s}$ and $P(\neg x \mid s) = 1 - P(x \mid s)$; $\widetilde{\1T}$ and $\widetilde{\1R}$ are nominal transition distribution and reward function computed from the observational distribution, i.e., 
    \begin{align}
        &\widetilde{\1T}\Parens{s, x, s'} = P\Parens{S_{t+1} = s' \mid S_t = s, X_t = x}, &&\widetilde{\1R}\Parens{s, x} = \3E\Brackets{Y_t \mid S_t = s, X_t = x} \label{eq:app_nominal}
    \end{align}
\end{proposition}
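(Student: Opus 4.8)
The plan is to prove the inequality at the level of the Bellman optimality equation, by decomposing the true (unidentifiable) interventional dynamics into an observationally identifiable part and an adversarially free counterfactual part, and then recognizing the right-hand side of the proposition as a contraction operator whose fixed point is dominated by $Q_*$.

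First I would exploit the confounded structure of $\M$ to decompose the interventional transition and reward. Because under $\doo(\pi)$ the action is set independently of $U_t$, the true quantities integrate over the whole exogenous space, $\1T(s,x,s') = \int_{\1U} \I_{s' = f_S(s,x,u)} P(u)$ and $\1R(s,x) = \int_{\1U} f_Y(s,x,u) P(u)$. Splitting each integral over the event $\set{u : f_X(s,u) = x}$ and its complement, the first piece is exactly the observational contribution $P(x \mid s)\widetilde{\1T}(s,x,s')$ (resp.\ $P(x\mid s)\widetilde{\1R}(s,x)$), while the second piece is a counterfactual term supported on $\set{u : f_X(s,u) \neq x}$ that cannot be recovered from data. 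The only constraints on this unknown term are that its transition part is nonnegative and integrates to $P(\neg x \mid s)$, and that its reward part lies in $[a\,P(\neg x\mid s),\, b\,P(\neg x\mid s)]$ since $f_Y \in [a,b]$.

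Next I would substitute this decomposition into the Bellman optimality equation (\Cref{eq:_2_bellman_q}), $Q_*(s,x) = \1R(s,x) + \gamma \sum_{s'} \1T(s,x,s') \max_{x'} Q_*(s',x')$, and minimize over the free counterfactual mass. The reward term is minimized by its lower endpoint $a\,P(\neg x\mid s)$; and the transition term $\gamma \sum_{s'} \Delta(s,x,s') \max_{x'} Q_*(s',x')$, subject to $\Delta \ge 0$ and $\sum_{s'}\Delta = P(\neg x\mid s)$, is minimized by placing all the mass on $\arg\min_{s'}\max_{x'} Q_*(s',x')$, yielding $\gamma\,P(\neg x \mid s)\min_{s'}\max_{x'}Q_*(s',x')$. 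This shows $Q_* \ge \1B[Q_*]$, where $\1B$ denotes the operator on the right-hand side of \Cref{eq:app_3_lower_q_1,eq:app_3_lower_q_2}.

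Finally I would argue that $\1B$ is a monotone $\gamma$-contraction in the sup norm: the weights $P(x\mid s)$ and $P(\neg x\mid s)$ are nonnegative and sum to one, $\widetilde{\1T}(s,x,\cdot)$ is a distribution, and both $\max_{x'}$ and $\min_{s'}$ are order-preserving and nonexpansive, so $\|\1B[Q]-\1B[Q']\|_\infty \le \gamma\|Q-Q'\|_\infty$. Banach's fixed-point theorem then yields a unique fixed point $\underline{Q_*} = \1B[\underline{Q_*}]$, which also settles the uniqueness claim of \Cref{prop:causal bellman eq appendix}. Combining $Q_* \ge \1B[Q_*]$ with monotonicity, the iterates $\1B^n[Q_*]$ form a non-increasing sequence converging to $\underline{Q_*}$, so $Q_* \ge \underline{Q_*}$ pointwise. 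The hard part will be the counterfactual-transition step: one must verify that concentrating the free mass $P(\neg x\mid s)$ on the worst next state is genuinely the infimum of the underlying linear program and is attainable by some CMDP instance compatible with the observational distribution, so that the bound is both valid and well-defined; care with $\inf$ versus $\min$ and with measurability is needed when $\1S$ is continuous.
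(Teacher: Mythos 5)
Your proposal is correct and takes essentially the same route as the paper: Manski-style natural bounds on the interventional reward and transition (which you derive explicitly by splitting the exogenous integral over $\{u : f_X(s,u) = x\}$, where the paper simply cites Manski), substituted into the Bellman optimality equation to get $Q_* \geq \1B[Q_*]$, followed by the same $\gamma$-contraction/Banach fixed-point argument establishing $\underline{Q_*}$ as the unique fixed point. If anything, your final step is more careful than the paper's: you correctly combine $Q_* \geq \1B[Q_*]$ with monotonicity of $\1B$ to get the decreasing iterates $\1B^n[Q_*] \downarrow \underline{Q_*}$, whereas the paper's appendix asserts $Q \geq TQ$ for \emph{all} $Q$, which is false in general and is exactly the gap your monotonicity argument fills.
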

\begin{proof}
    Starting from the Bellman Optimal Equation for Q-values, the optimal state action value function is given by,
    \begin{align}
        Q^*(s,x) &= R(s,x) + \sum_{s'} T(s,x,s') \max_{x'}Q^*(s',x')
    \end{align}
    Note that the actions in the reward and transition functions are done by an interventional agent, which is actually $\interv{x}$ in the context of a CMDP. Due to the confounding nature of those two distributions, we can use the natural bounds to bound the interventional reward ($\mathcal{R}$) and transition distribution ($\mathcal{T}$) with observational data ($\widetilde{\mathcal{R}}$,$\widetilde{\mathcal{T}}$)~\citep{Manski1989NonparametricBO}.
    \begin{align}
        \mathcal{R}(s,x)&\geq \widetilde{R}(s,x)P(x|s) + aP(\neg x|s)\\
        \sum_{s'} T(s,x,s') \max_{x'}Q^*(s',x')&\geq \sum_{s'}\widetilde{T}(s,x,s')P(x|s)\max_{x'}Q^*(s',x') \nonumber\\
        &\phantom{123556}+ P(\neg x|s)\min_{s'}\max_{x'}Q^*(s',x')
    \end{align}
    Then we have, 
    \begin{align}
        Q^*(s, x) &\geq \widetilde{R}(s,x)P(x|s) + aP(\neg x|s) + 
        \nonumber \\
        &\phantom{1234}\sum_{s'} \widetilde{T}(s,x,s')P(x|s)\max_{x'}Q^*(s',x') 
        + P(\neg x|s)\min_{s'}\max_{x'}Q^*(s',x')
    \end{align}
    where $\widetilde{\mathcal{R}}_h(s,x) = \mathbb{E}[Y_h|S_h=s, X_h=x]$, $\widetilde{\mathcal{T}}_h$ is shorthand for $\widetilde{\mathcal{T}}_h(s,x,s') = P(S_{h+1}=s'|S_h=s, X_h=x)$ and $P(x|s) = P_h(X_h = x|S_h = s)$ are estimated from the offline dataset. And $a$ is a known lower bound on the reward signal, $Y_h \leq b$. In this step, we lower bound the next state transition by assuming the worst case that for the action not taken with probability $P_h(\neg x|s)$, the agent transits with probability $1$ to the worst possible next state, $\min_{s''}V^*_{h+1}(s'')$. 
    
    Then after rearranging terms, we have,
    \begin{align}
        {Q^{*}}(s, x) \geq  P(x\mid s)\bigg (\widetilde{\1R}\Parens{s, x} + \gamma \sum_{s', x'} \widetilde{\1T}\Parens{s, x, s'} \max_{x'} {Q^{*}}(s', x') \bigg ) & \nonumber \\
		+ P(\neg x \mid s) \bigg (a + \gamma \min_{s'}\max_{x'} {Q^{*}}(s', x') \bigg ) &    
    \end{align}
    Optimizing the Q-value function w.r.t. this inequality gives us a lower bound on the optimal state value. Replace the symbol $Q^*$ with $\underline{Q_{*}}$ and we have,
    \begin{align}
        \underline{Q_{*}}(s, x) \geq  P(x\mid s)\bigg (\widetilde{\1R}\Parens{s, x} + \gamma \sum_{s', x'} \widetilde{\1T}\Parens{s, x, s'} \max_{x'} \underline{Q_{*}}(s', x') \bigg ) & \nonumber \\
		+ P(\neg x \mid s) \bigg (a + \gamma \min_{s'}\max_{x'} \underline{Q_{*}}(s', x') \bigg ) &  
    \end{align}
\end{proof}

Next, we will show in \cref{thm:causal bellman convergence appendix} that this will converge to a unique fixed point, which is a valid lower bound of the optimal state-action value function.

\begin{proposition}[Convergence of Causal Bellman Optimal Equation in Stationary CMDPs]
\label{thm:causal bellman convergence appendix}
The Causal Bellman Optimality Equation converges to a unique fixed point, which is also a lower bound on the optimal interventional state values under the assumption that in the observational data $P(s,x)>0, \forall s,x$ in the given CMDP.
\end{proposition}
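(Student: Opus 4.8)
The plan is to recognize the right-hand side of \Cref{eq:app_3_lower_q_1,eq:app_3_lower_q_2} as a Bellman-type operator and to invoke the Banach fixed-point theorem. Concretely, I would define an operator $\1B$ acting on bounded functions $Q : \1S \times \1X \to \3R$ by
\[
(\1B Q)(s,x) = P(x \mid s)\Parens{\widetilde{\1R}(s,x) + \gamma \sum_{s'} \widetilde{\1T}(s,x,s')\max_{x'} Q(s',x')} + P(\neg x \mid s)\Parens{a + \gamma \min_{s'}\max_{x'} Q(s',x')}.
\]
The assumption $P(s,x) > 0$ for all $(s,x)$ guarantees that the nominal quantities $\widetilde{\1R}$, $\widetilde{\1T}$, and $P(x \mid s)$ of \Cref{eq:app_nominal} are well-defined from the observational distribution, so that $\1B$ is a well-defined self-map on the space of bounded $Q$-functions equipped with the supremum norm $\|\cdot\|_\infty$.

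First I would show that $\1B$ is a $\gamma$-contraction in $\|\cdot\|_\infty$. Given bounded $Q_1, Q_2$, the difference $(\1B Q_1)(s,x) - (\1B Q_2)(s,x)$ cancels the reward terms and leaves only the discounted value terms. Using that both the $\max_{x'}$ operator and the composite $\min_{s'}\max_{x'}$ operator are non-expansive, i.e. $|\max_{x'} Q_1(s',x') - \max_{x'} Q_2(s',x')| \le \|Q_1 - Q_2\|_\infty$ and likewise for $\min_{s'}\max_{x'}$, together with $\sum_{s'}\widetilde{\1T}(s,x,s') = 1$ and $P(x \mid s) + P(\neg x \mid s) = 1$, one obtains $|(\1B Q_1)(s,x) - (\1B Q_2)(s,x)| \le \gamma \|Q_1 - Q_2\|_\infty$ uniformly in $(s,x)$, hence $\|\1B Q_1 - \1B Q_2\|_\infty \le \gamma \|Q_1 - Q_2\|_\infty$. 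For $\gamma < 1$ this is a strict contraction on a complete metric space, so the Banach fixed-point theorem yields a unique fixed point $\underline{Q_{*}}$ and guarantees that the iteration $Q_{n+1} = \1B Q_n$ converges to it from any initialization.

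It then remains to argue that this fixed point is a genuine lower bound on the optimal interventional value $Q_{*}$. Here I would combine monotonicity with the super-solution property. The operator $\1B$ is monotone: if $Q_1 \ge Q_2$ pointwise, then $\1B Q_1 \ge \1B Q_2$ pointwise, since $\max$, $\min$, and the nonnegative coefficients $P(x \mid s)$, $P(\neg x \mid s)$, $\widetilde{\1T}(s,x,s')$ all preserve the ordering. The derivation in \Cref{prop:causal bellman eq appendix} establishes that the true optimal value function satisfies $Q_{*} \ge \1B Q_{*}$. Applying $\1B$ repeatedly and using monotonicity yields a non-increasing chain $Q_{*} \ge \1B Q_{*} \ge \1B^2 Q_{*} \ge \cdots$, which by the contraction converges to $\underline{Q_{*}}$; passing to the limit gives $Q_{*}(s,x) \ge \underline{Q_{*}}(s,x)$ for every $(s,x)$.

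The main obstacle I anticipate is establishing the non-expansiveness of the nested $\min_{s'}\max_{x'}$ term cleanly, since it is the one structural feature departing from the classical Bellman operator; the bound follows from composing the standard non-expansiveness of $\max$ and $\min$, but must be written out carefully. A secondary subtlety concerns $\gamma$: the stated range $\gamma \in [0,1]$ includes the boundary case $\gamma = 1$, for which $\1B$ is only non-expansive rather than strictly contracting, so the uniqueness-and-convergence claim relies on $\gamma < 1$, i.e. the standard discounted setting.
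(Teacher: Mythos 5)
Your proposal follows essentially the same route as the paper's proof: the same operator, the same $\gamma$-contraction argument in the sup norm (via non-expansiveness of $\max$ and $\min\max$ together with $P(x\mid s)\sum_{s'}\widetilde{\1T}(s,x,s') + P(\neg x \mid s) = 1$), Banach's fixed-point theorem for existence, uniqueness, and convergence, and the lower-bound claim obtained by iterating the operator starting from $Q_*$. If anything, your explicit appeal to monotonicity to justify the chain $Q_* \geq \1B Q_* \geq \1B^2 Q_* \geq \cdots$ is more careful than the paper's corresponding step, which asserts $Q \geq TQ$ for \emph{all} $Q$ (true only for super-solutions such as $Q_*$), and your caveat that strict contraction requires $\gamma < 1$ correctly identifies an implicit assumption the paper leaves unstated.
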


\begin{proof}
    We will first show that the following Causal Bellman Optimality operator (will denote as ``the operator" or $T$ below for simplicity) is a contraction mapping with respect to a max norm. 
    Then by Banach's fixed-point theorem~\citep{BanachSurLO}, this operator has a unique fixed point, and updating any initial point iteratively will converge to it. Then we show that this unique fixed point is indeed a lower bound of the optimal interventional Q-value.
    
    Let the operator $T$ be,
    \begin{align}
        T\underline{Q^{*}}(s, x) =  P(x\mid s)\bigg (\widetilde{\1R}\Parens{s, x} + \gamma \sum_{s', x'} \widetilde{\1T}\Parens{s, x, s'} \max_{x'} \underline{Q_{*}}(s', x') \bigg ) & \nonumber \\
		+ P(\neg x \mid s) \bigg (a + \gamma \min_{s'}\max_{x'} \underline{Q_{*}}(s', x') \bigg ) & \label{eq:bellman update inequality} .
    \end{align}
    For arbitrary Q-value bound, $\underline{Q_{*}^1}, \underline{Q_{*}^2}$, let their initial difference under max-norm be $c = \max_{s,x}\left| \underline{Q_{*}^1}(s,x) - \underline{Q_{*}^2}(s,x)\right| \geq 0$. We can bound their difference after one step update by,
    \begin{align}
        \max_{s,x}\left|T\underline{Q_{*}^1}(s,x) - T\underline{Q_{*}^2}(s,x)\right| &\leq 
        \gamma \max_{s,x} \left|P(x|s) \sum_{s'} \widetilde{T}(s,x,s')\max_{x'}\left| \underline{Q_{*}^1}(s',x') - \underline{Q_{*}^2}(s',x')\right| \right . \nonumber\\
        &\phantom{123}\left . + P(\neg x|s) \max_{s',x'}\left| \underline{Q_{*}^1}(s',x') - \underline{Q_{*}^2}(s',x')\right| \right|.
    \end{align}
    Thus, under the operator $T$, we have non-expansion Q-value differences,
    \begin{align}        
        \max_{s,x}\left|T\underline{Q_{*}^1}(s,x) - T\underline{Q_{*}^2}(s,x)\right|
        &\leq 
        \gamma \max_{s,x} \left( P(x|s) \sum_{s'} \widetilde{T}(s,x,s')\max_{x'}\left| \underline{Q_{*}^1}(s',x') - \underline{Q_{*}^2}(s',x')\right| \right . \nonumber\\
        &\phantom{123}\left . +  P(\neg x|s) \max_{s',x'}\left| \underline{Q_{*}^1}(s',x') - \underline{Q_{*}^2}(s',x')\right| \right ),\\
        &\leq 
        \gamma c \max_{s,x} \left( P(x|s) \sum_{s'} \widetilde{T}(s,x,s') + P(\neg x|s)\right),\\
        &=
        \gamma c.
    \end{align}
    for all $\underline{Q_{*}^1}, \underline{Q_{*}^2}$ satisfying $c \geq \max_{s,x}\left| \underline{Q_{*}^1}(s,x) - \underline{Q_{*}^2}(s,x)\right| \geq 0$. Thus, $T$ is a contraction mapping with respect to the max norm. And there exists a unique fixed point $\underline{Q_{*}}$ when we apply this operator $T$ iteratively to an arbitrary Q-value vector till convergence. 
    
    We then show that this fixed point is indeed a lower bound to the optimal interventional Q-value. By the update rule of $T$ (\cref{eq:bellman update inequality}), $\forall Q(s,x), Q(s,x)\geq TQ(s,x)$. Thus, for the optimal Q-value, we can have $Q^*(s,x) \geq \lim_{k\to \infty}T^kQ^*(s,x) = \underline{Q_*}(s,x)$ where $T^k$ denotes applying $T$ iteratively for $k$ times. This concludes the proof.
\end{proof}

\section{Confounded Atari Games Design}\label{app:_c}
In this section, we present the detailed design of each confounded Atari games. The core design idea is that we would like to occlude the part of the screens that contains information useful for making decisions but is not a significant factor from human players' perspectives. For example, the remaining lives and current scores can be an indicator of the difficulty level or even a unique game level identifier. However, such information is not usually exploited intensively in human game plays. Thus, we decide to mask out such regions. 

Below is a detailed list of confounders design for each game.
\begin{enumerate}[label=\textbullet, leftmargin=20pt, topsep=0pt, parsep=0pt, itemsep=1pt]
\item \textbf{Amidar}: The original game screen shows score and remaining life which shouldn't be the major factor affecting the policies.
\item \textbf{Asterix}: The original game screen shows score and remaining life which shouldn't be the major factor affecting the policies.
\item \textbf{Boxing}: The original game screen shows score and remaining time which shouldn't be the major factor affecting the policies. The right half of the arena can also be excluded since there is still a wining strategy by staying on the left hand side (as long as the demonstrator has such demonstrations on the left hand side). 
\item \textbf{Breakout}: The original game screen shows score and remaining life which shouldn't be the major factor affecting the policies.
\item \textbf{ChopperCommand}: The original game screen shows score and remaining life which shouldn't be the major factor affecting the policies. The mini map can be helpful, but without it, there should also be a good policy.
\item \textbf{Gopher}: The original game screen shows score and remaining life which shouldn't be the major factor affecting the policies. The gopher location, while nice to have, can be removed to force the learning to focus more on the hole not the gopher.
\item \textbf{KungFuMaster}: The original game screen shows score and remaining life which shouldn't be the major factor affecting the policies.
\item \textbf{MsPacman}: The original game screen shows score and remaining life which shouldn't be the major factor affecting the policies. 
\item \textbf{Pong}: The original game screen shows score and remaining life which shouldn't be the major factor affecting the policies. Moreover, one can simply win the game by shoot back every incoming ball. Thus, we can further block out the opponent's paddle location.
\item \textbf{Qbert}: The color itself is already a good confounder. The demonstrator model \citep{alonso2024diffusionatariworldmodel} is observing three channel RGB images with LSTM cells. The student model only has grayscale image stacks. See more details in \Cref{app:_e}.
\item \textbf{RoadRunner}: The original game screen shows score and remaining life which shouldn't be the major factor affecting the policies. Also the sky/desert part are mostly 
\item \textbf{Seaquest}:The original game screen shows score and remaining life which shouldn't be the major factor affecting the policies.
\end{enumerate}

We show each masked atari games in \Cref{fig:mask_games_full}.
\begin{figure}[t]
    \centering
    \includegraphics[width=\linewidth]{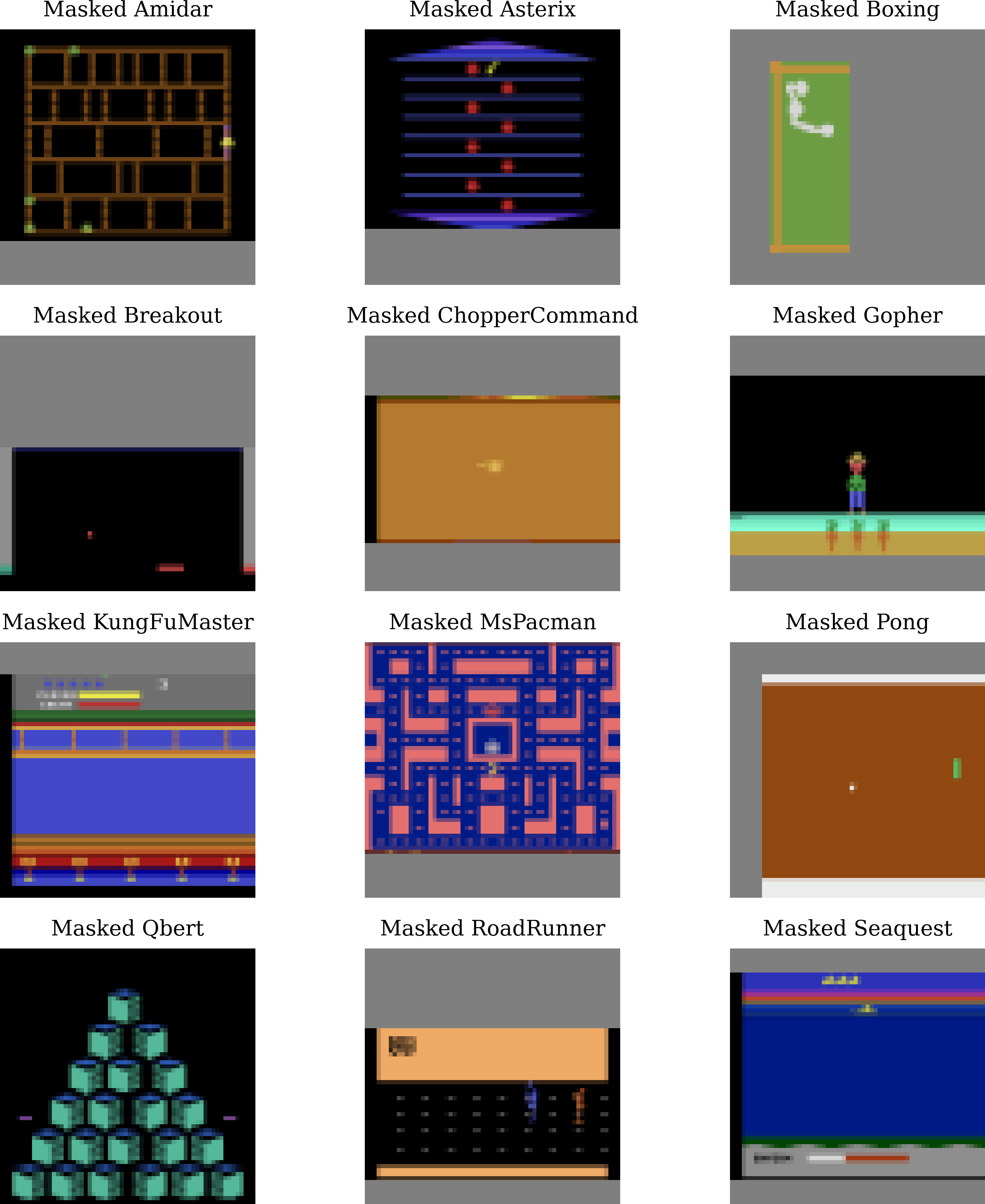}
    \caption{All 12 confounded Atari games. Masked areas are shown in grey.}
    \label{fig:mask_games_full}
\end{figure}

\section{Implementation Details and Experiment Setups}\label{app:_d}
The input to all the networks consists of a stack of four grayscale frames with a frame skipping of four (so the agent observes a stack of four out of sixteen consecutive actual frames), each down sampled to 64×64 resolution, allowing the agent to infer temporal dynamics such as ball velocity. To reduce the inherent flickering from the Atari game, we also apply max pooling over each two consecutive actual frames. We also downsize the input to 64x64 to align with the input size requirement of the demonstrator, diamond \citep{alonso2024diffusionatariworldmodel}. The only differences in terms of observations are that (1) the demonstrator takes a single colored frame (also max-pooled) as input per each time step, and (2) the demonstrator has access to the original full screen observation while the learners only has access to a masked partial screen. For the reward, we clip it between $[-1, +1]$ to stabilize training.

For the other demonstrator, we use the sebulba model \citep{DBLP:journals/corr/abs-2104-06272sebulba} implemented by the CleanRL package \citep{huang2022cleanrl}. Its backbone is from Impala~\citep{espeholtIMPALAScalableDistributed2018a} architecture and the training algorithm is PPO~\citep{schulmanProximalPolicyOptimization2017}. Its input image is in 81x81 but still are stacked grayscale images, the same as what the DQN agents use. Actions are selected as the argmax action with the biggest logits after applying the Gumbel softmax trick~\citep{jang2017categoricalgumbelsoftmax}. The training batch size is also increased to 2048 and is trained with cosine annealing learning rate scheduler. For all runs, the learning rate is set to 5e-4. For the consine annealing learning rate scheduler, the minimum learning rate is 1e-6.

For all CNN based DQN networks in this work, we adopt the nature DQN architecture introduced by Mnih et al.~\cite{mnihHumanlevelControlDeep2015a}. The network comprises three convolutional layers followed by two fully connected layers, outputting Q-values for each discrete action. While for LSTM based ones, we only replace the second to last linear layer in nature DQN with an LSTM cell. For both the linear layers and lstm cells, we use a hidden dimension of 512. To mitigate overestimation bias in Q-learning, we further incorporate the Double DQN modification \citep{HasseltGS16doulbedqn}, which decouples action selection and evaluation in the target update by using the online network to select actions and the target network to evaluate their value. This leads to more stable and accurate value estimates. We also use epsilon greedy for exploration as the standard DQN algorithm. See \Cref{alg:_4_cdqn} for the full pseudo-code.

To train our model, we use an H100 GPU. On average, for each game and each seed, it takes around 2 hrs and a RAM space of less than 2 GB for using the diamond demonstrator \citep{alonso2024diffusionatariworldmodel}. While it takes up to 8 hrs for using the sebulba demonstrator \citep{DBLP:journals/corr/abs-2104-06272sebulba} from CleanRL \citep{huang2022cleanrl}. 

\begin{algorithm}[t]
	\caption{Causal Deep Q-Learning (\texttt{Causal-DQN})}
	\label{alg:_4_cdqn}
	\setlength{\textfloatsep}{0pt}
	\begin{algorithmic}[1]
		\State Initialize replay memory $\1D$
            \State Initialize action-value function $\underline{Q_{*}}^\theta$
            and a target network $\underline{Q_{*}}^{\theta^-},\theta^- \gets \theta$
            \For{episodes $=1, \dots, M$}
                \State Sample initial state $s_1$ and obtain preprocessed $\phi_1 = \phi(s_1)$
                \For{$t =1, \dots, T$}
                    \State With probability $\epsilon$ select a random action $x_t$
                    \State Otherwise sample an action from demonstrator, $x_t \gets f_X(s_t, u_t)$
                    \State Execute action $\doo(x_t)$ in environment and observe reward $y_t$ and state $s_{t+1}$
                    \State Store transition $(s_t, x_t, y_t, s_{t+1})$ in $\1D$
                    \State Sample a minibatch of transitions $\{(s_i, x_i, y_i, s_{i+1})\}_{i=1}^B$ from $\1D$
                    \State Set value target $w_i(x)$ for every action $x \in \mathcal{X}$ w.r.t sample $(s_i, x_i, y_i, s_{i+1})$,
                        \begin{align}
                            w_i(x) =
                            \begin{cases}
                            y_i  + \gamma \max_{x'} \underline{Q_{*}}(s_{i+1}, x'; \theta^-) & \mbox{if } x = x_i    \\
                            a + \gamma \min_{s'}\max_{x'} \underline{Q_{*}}(s', x'; \theta^-)   & \mbox{if } x \neq x_i
                            \end{cases}
                        \end{align}
                    \State Perform a gradient descent step on $\sum_{x} \Parens{w_i(x) - \underline{Q_{*}}(s_{i}, x; \theta)}^2$ according to \Cref{eq:_3_gradient}
                    \State Every $T_{\text{target}}$ steps, update $\theta^-\gets \theta$
                \EndFor
            \EndFor
	\end{algorithmic}
\end{algorithm}

\section{More Experiment Results}\label{app:_e}
\begin{table*}[t]
\centering
\caption{Average evaluation returns of agents on the 12 confounded Atari games trained with 1M environment steps and aggregated normalized returns concerning the sebulba demonstrator's performance \citep{DBLP:journals/corr/abs-2104-06272sebulba}. Bold numbers indicate the best-performing methods. All results are averaged over 5 seeds except that column Random is from \citep{alonso2024diffusionatariworldmodel}. \texttt{Causal-DQN} significantly outperforms others.}
\label{tab:atari_results_sebulba}
\adjustbox{max width=\textwidth}{
\begin{tabular}{l|r|rrrrrrr}
\toprule
{Game} & \makecell{Demonstrator\\(sebulba)} & {Random} & \makecell{Interv. \\ DQN} & \makecell{Conf. \\DQN} & \makecell{Conf. \\LSTM-DQN} & \makecell{\textbf{\texttt{Causal-DQN}}\\(diamond)} & \makecell{\textbf{\texttt{Causal-DQN}}\\(sebulba)} \\
\midrule
Amidar & 2148.2 & 5.8 & 44.0 & 22.6 & 24.6 & {282.6} & \textbf{462.8}\\
Asterix & 250182.0 & 210.0 & 650.0 & 369.0 & 662.0 & \textbf{2587.0} & {586.0}\\
Boxing & 100.0 & 0.1 & -0.62 & -7.6 & -13.4 & \textbf{71.5} & {-16.86}\\
Breakout & 771.0 & 1.7 & 2.2 & 0.9 & 2.9 & {131.2} & \textbf{408.2}\\

ChopperCommand & 21682.0 & 811.0 & 1192.0 & 1096.0 & 918.0 & {1658.0} & \textbf{5410.0}\\
Gopher & 3719.2 & 257.6 & 288.8 & 646.4 & 132.0 & \textbf{7327.2} & {4008.0}\\
KungFuMaster & 46046.0 & 258.5 & 12416.0 & 8468.0 & 4844.0 & \textbf{44196.0} & {39222.0}\\
MsPacman & 4538.4 & 307.3 & 1191.6 & 963.6 & 561.4 & {1747.6} & \textbf{2346.8}\\

Pong & 21.0 & -20.7 & -20.8 & -20.8 & -20.6 & \textbf{21.0} & \textbf{21.0}\\
Qbert & 23484.0 & 163.9 & 322.5 & 283.5 & 136.5 & {4458.5} & \textbf{15136.0}\\
RoadRunner & 56056.0 & 11.5 & 1154.0 & 1182.0 & 1108.0 & {27414.0} & \textbf{49482.0}\\
Seaquest & 1797.2 & 68.4 & 237.2 & 247.6 & 101.6 & {980.0} & \textbf{1350.0}\\
\midrule
Normalized Mean ($\uparrow$) & 1.00 & 0.00 & 0.00 & 0.00 & 0.00 & {0.53} & \textbf{0.55}\\
Normalized Median ($\uparrow$) & 1.00 & 0.01 & 0.03 & 0.04 & 0.02 & {0.40} & \textbf{0.59}\\
Normalized IQM ($\uparrow$) & 1.00 & 0.0 & 0.02 & 0.02 & 0.02 & {0.47} & \textbf{0.59}\\
\bottomrule
\end{tabular}
}
\end{table*}

\begin{figure}
    \centering
    \includegraphics[width=\linewidth]{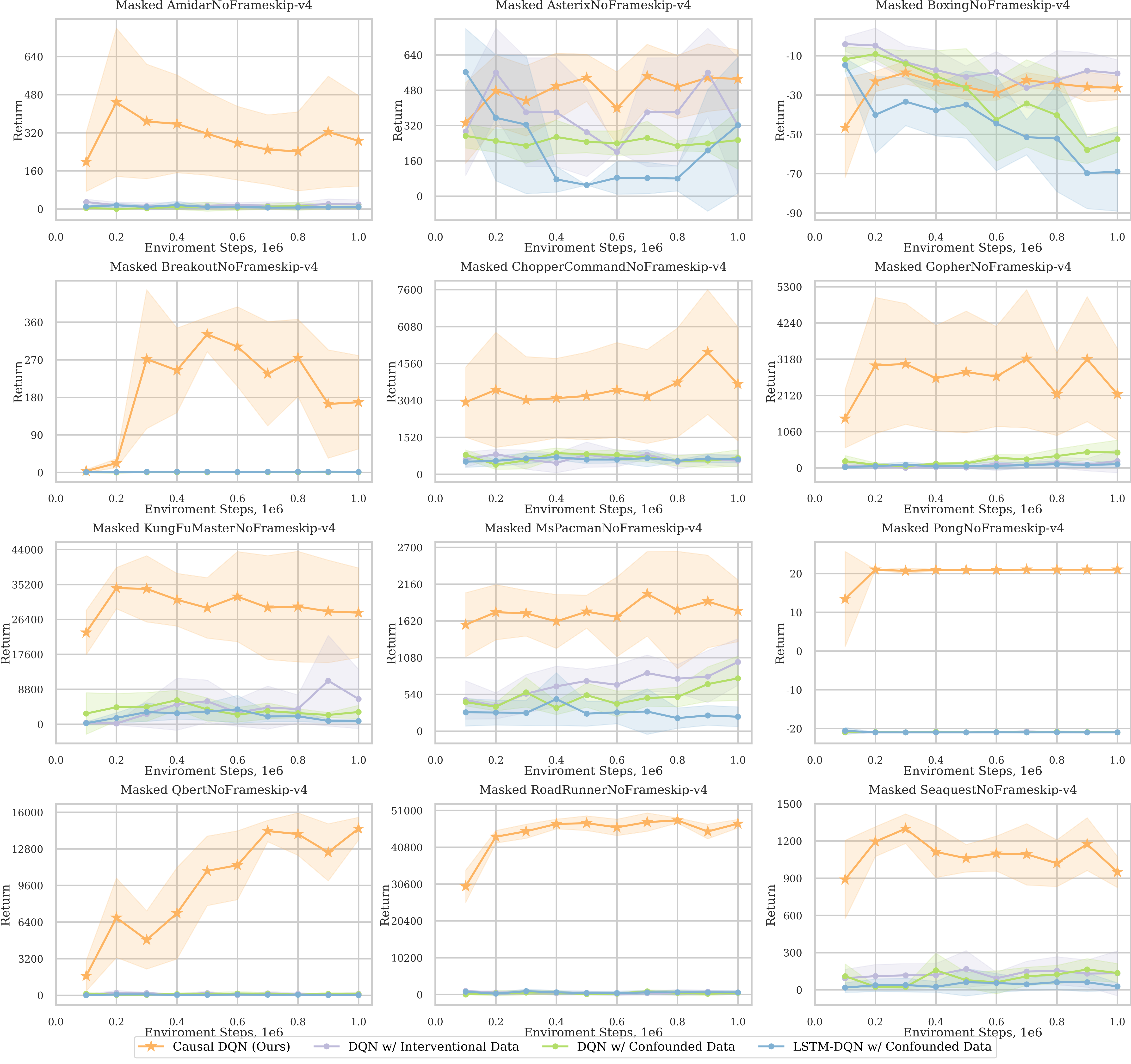}
    \caption{Average evaluation performance for 12 confounded Atari games with sebulba \citep{DBLP:journals/corr/abs-2104-06272sebulba} as the demonstrator. During training over the 1M environment steps, we evaluate the agent every 100K steps for 10 episodes each time. The curve is further averaged over 5 seeds with one standard deviation across trials as the shaded area.} 
    \label{fig:eval_curve_sebulba}
	\vspace{-0.2in}
\end{figure}

Here we present the experiment results for agents trained with another even stronger demonstrator, sebulba \citep{DBLP:journals/corr/abs-2104-06272sebulba}. 
From \Cref{tab:atari_results_sebulba} and \Cref{fig:eval_curve_sebulba}, we can draw the same conclusion that our \texttt{Causal-DQN} is robust to confounded demonstrator demonstrations and is able to extract useful policies out of such demonstrations. 
And we can see that in most of the games, the agent trained by sebulba demonstrator significantly outperforms the agent trained by diamond demonstrator \citep{alonso2024diffusionatariworldmodel}. This can be an empirical evidence that our proposed \texttt{Causal-DQN} is able to scale with the demonstrator's performance. 
But his also reveals two limitations of our current approach that when the demonstrator generated data has zero support in the unmasked area, it is not possible to learn any meaningful behaviors from such demonstrations. 
\begin{figure}
    \centering
	\includegraphics[width=\linewidth]{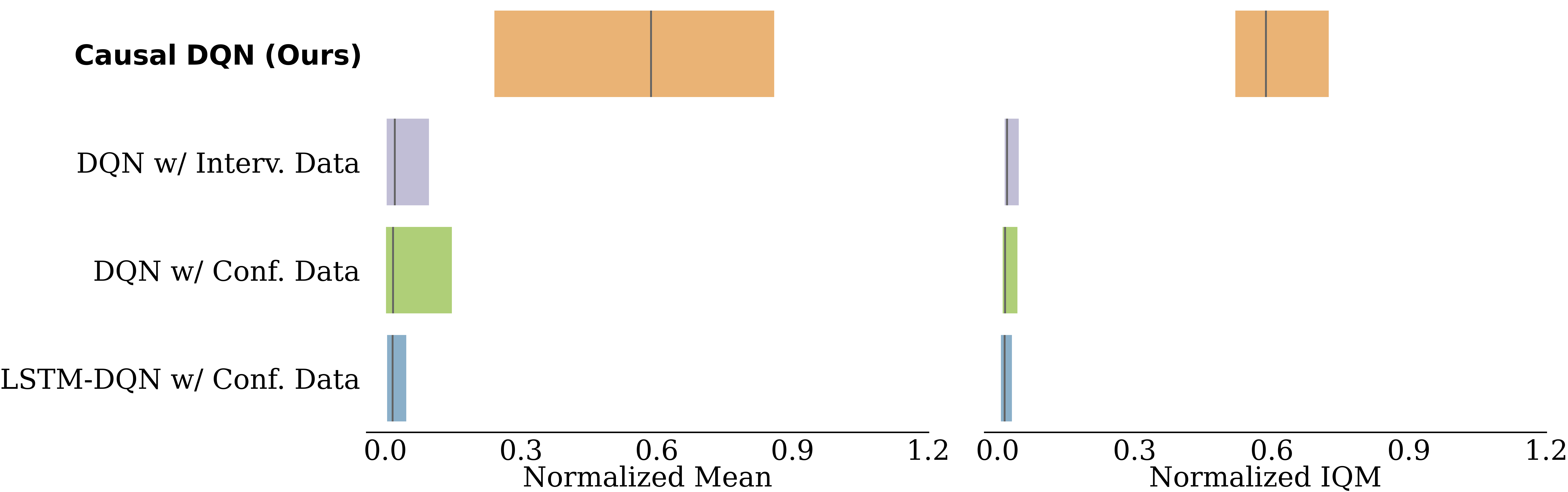}
	\caption{Normalized mean and normalized IQM scores. \texttt{Causal-DQN} achieves a normalized mean return of 0.55 and a normalized IQM of 0.59 with sebulba demonstrator.}
    \label{fig:box_sebulba}
\end{figure}
In our current setup of the confounded Boxing game, we mask out the right half but the sebulba demonstrator has a policy of fighting in the right half. Thus, none of the algorithms we tested can learn. To further verify this intuition, we mask out the left hand side of the arena in the Boxing game (\Cref{fig:boxing_left_mask}) and rerun all baselines and \texttt{Causal-DQN}. As expected, as shown in \Cref{fig:boxing_left_curve}, our model is now able to converge to the optimal 100 score matching the demonstrator's performance while other baeslines still struggle to learn from the confounded demonstrations. 
\begin{figure}[t]
    \centering
    \begin{subfigure}[b]{0.3\textwidth}
        \centering
        \includegraphics[width=\linewidth]{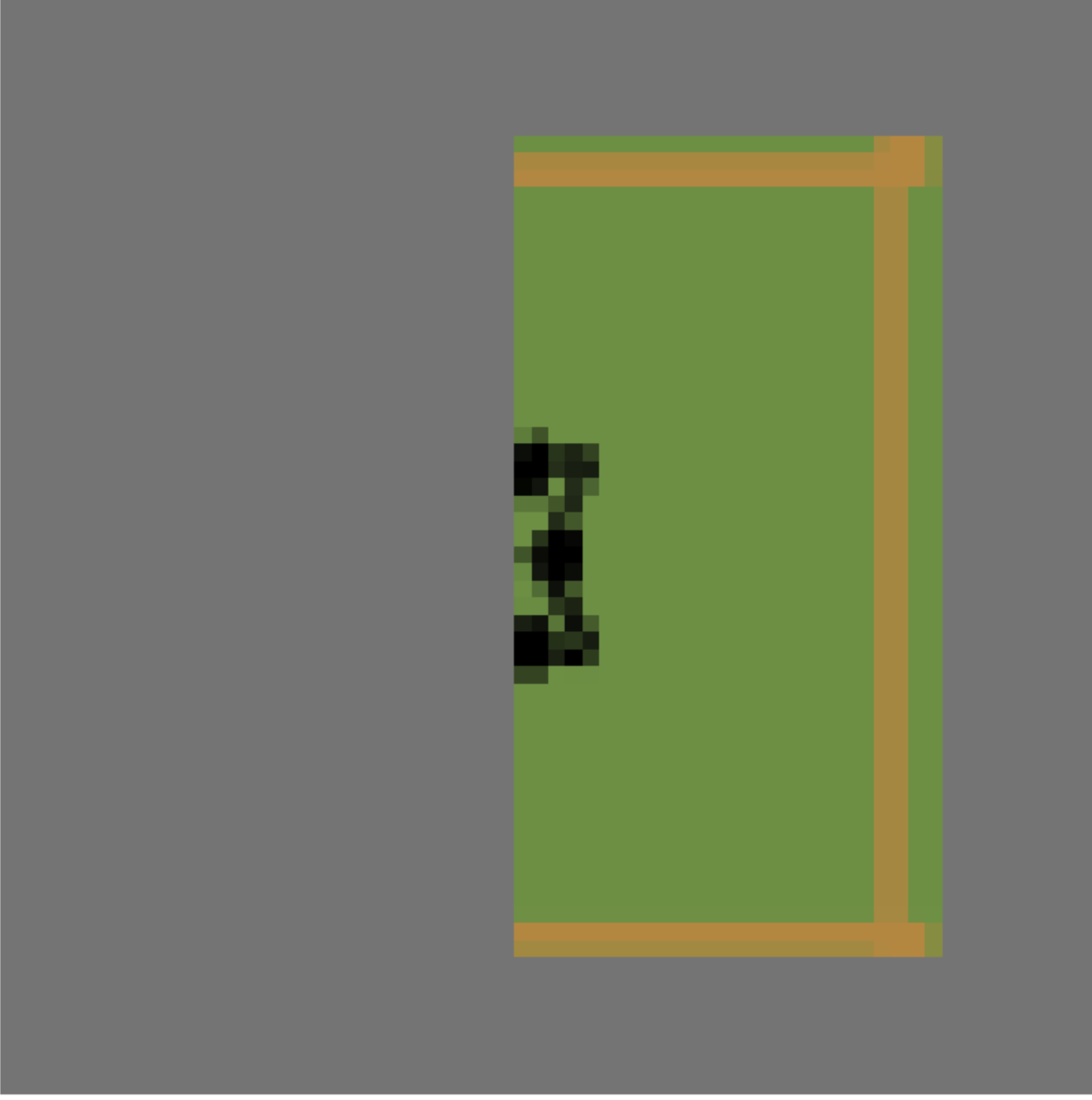}
        \vspace{.15in}
        \label{fig:boxing_left_mask}
    \end{subfigure}
    \hfill
    \begin{subfigure}[b]{0.5\textwidth}
        \centering
        \includegraphics[width=\linewidth]{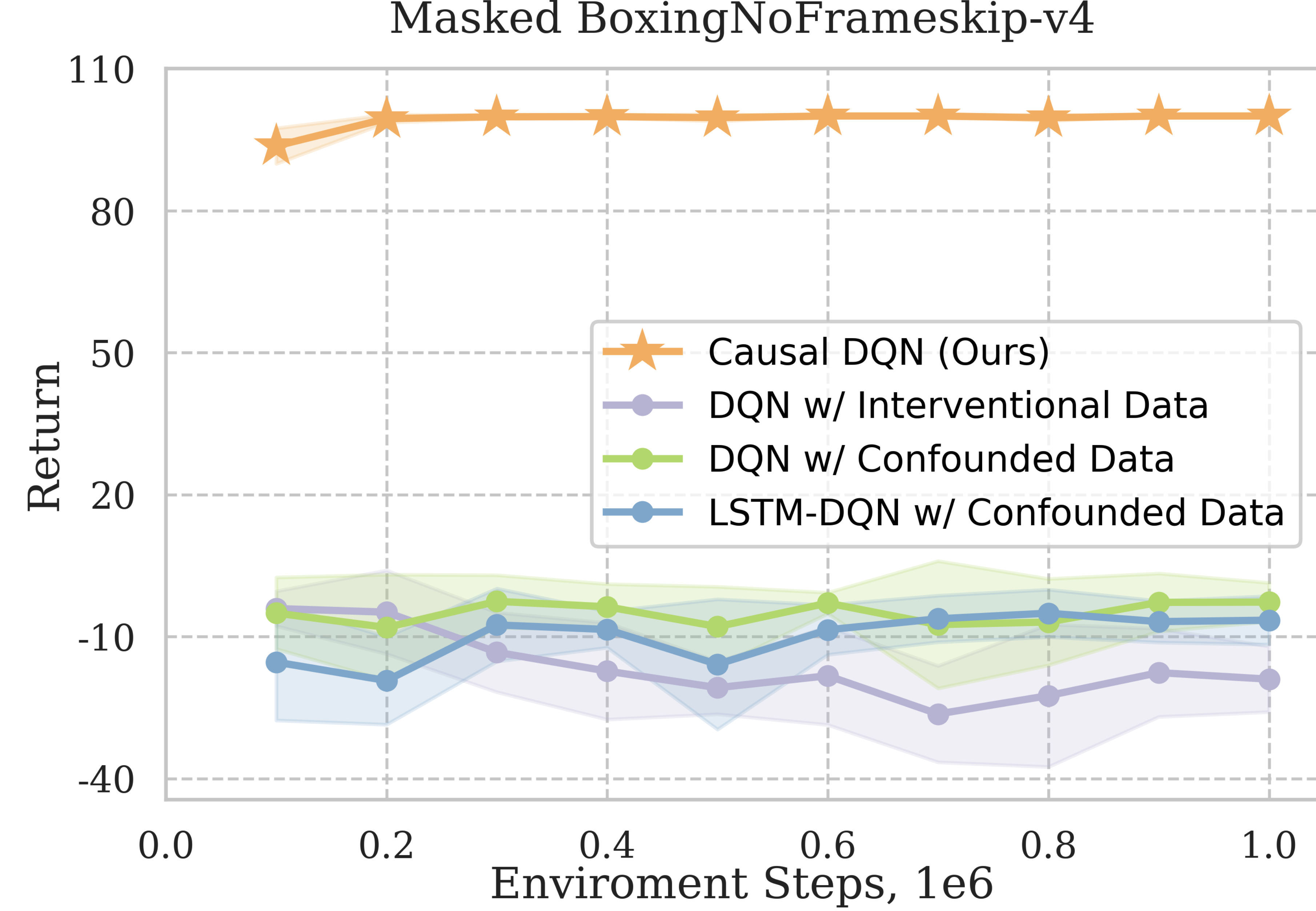}
        \label{fig:boxing_left_curve}
    \end{subfigure}
	\caption{Left: The confounded Boxing game with left side arena masked. Right: Evaluation returns of \texttt{Causal-DQN} and other baselines. The curve is an average over five random seeds with one standard deviation as the shaded area. The orange curve is our \texttt{Causal-DQN} and other colors are baselines. Same legend following previous figures.}
\end{figure}
Also, when the demonstrator's policy is distributional and multi-modal, i.e., there could be multiple best actions, the deterministic policy like DQN cannot capture such knowledge very well. In Asterix, due to the non-standard way of using Gumbel softmax implemented by CleanRL \citep{huang2022cleanrl}, there could be different optimal actions for the same state, posing a challenge to the deterministic learners.

\section{Broader Impact}\label{app:_f}
This paper presents work whose goal is to advance the field of 
Reinforcement Learning. There are many potential societal consequences of our work, none of which we feel must be specifically highlighted here. One major reason is that our proposed algorithm aims at extracting knowledge from another demonstrator model solving Atari games of which we don't find any profound social impacts worth mentioning here.

\section{Limitations} \label{app:limitation}
Our current derivation applies to single step Q-value functions. For other objectives for critics like multi-step returns, eligibility traces and advantages, we need to further extend the Causal Bellman Equation to accommodate those. As we also show in \Cref{app:_e}, the proposed \texttt{Causal-DQN} cannot learn useful policies when the demonstrator policy has no support in the unmasked area. For example, in the boxing game, we mask out the right half of the screen while the demonstrator agent's winning policy is to fight in the right half. In which case, none of the agent under masked observations is able to learn. Furthermore, in games like Asterix where a distributional demonstrator has a more stochastic behavior, our \texttt{Causal-DQN} also cannot outperform others. We hypothesis this to be an inherent representational limit of deterministic DQN policies.

\end{document}